\documentclass{article}

\newcommand{\name}{{\fontfamily{qpl}\selectfont DRIFT}}
\renewcommand{\textit}[1]{{%
  \fontfamily{ppl}\itshape\selectfont #1%
}}
\renewcommand{\textbf}[1]{{%
  \fontfamily{ppl}\bfseries\selectfont #1%
}}

\usepackage{microtype}
\usepackage{graphicx}
\usepackage{subcaption}
\usepackage{booktabs} 

\usepackage{hyperref}

\usepackage[accepted]{icml2026}

\usepackage{amsmath}
\usepackage{amssymb}
\usepackage{mathtools}
\usepackage{amsthm}

\usepackage{tcolorbox}
\usepackage{algorithm}
\usepackage{multirow}
\usepackage{booktabs}
\usepackage{xcolor}
\usepackage[table]{xcolor}

\newcommand{\camera}[1]{{\color{black}#1}}

\usepackage{pifont}
\newcommand{\cmark}{\ding{51}}
\newcommand{\xmark}{\ding{55}}

\newsavebox{\PromptBoxBox}
\newenvironment{PromptBox}[1][Prompt]{%
  \par\smallskip\noindent
  \begingroup
  \setlength{\fboxsep}{7pt}
  \setlength{\fboxrule}{0.5pt}
  \begin{lrbox}{\PromptBoxBox}%
  \begin{minipage}{\dimexpr\linewidth-2\fboxsep-2\fboxrule\relax}%
    \setlength{\parindent}{0pt}%
    \setlength{\parskip}{2pt}%
    \textsf{\bfseries #1}\par
    \vspace{2pt}\hrule\vspace{4pt}%
    \footnotesize\ttfamily\raggedright
}{%
  \end{minipage}\end{lrbox}%
  \fbox{\usebox{\PromptBoxBox}}%
  \endgroup
  \par\smallskip
}

\usepackage[capitalize,noabbrev]{cleveref}

\theoremstyle{plain}
\newtheorem{theorem}{Theorem}
\newtheorem{proposition}[theorem]{Proposition}
\newtheorem{lemma}[theorem]{Lemma}

\theoremstyle{definition}

\theoremstyle{remark}

\usepackage[textsize=tiny]{todonotes}

\begin{document}

\twocolumn[
  \icmltitle{\name{}: Decoupled Rollouts and Importance-Weighted Fine-Tuning for Efficient Multi-Turn Optimization}



  \icmlsetsymbol{equal}{*}

  \begin{icmlauthorlist}
    \icmlauthor{Jian Mu}{1}
    \icmlauthor{Tianyi Lin}{1}
    \icmlauthor{Chengwei Qin}{1}
    \icmlauthor{Zhongxiang Dai}{2}
    \icmlauthor{Yao Shu}{1}
  \end{icmlauthorlist}

  \icmlaffiliation{1}{The Hong Kong University of Science and Technology (Guangzhou)}
  \icmlaffiliation{2}{The Chinese University of Hong Kong, Shenzhen}

  \icmlcorrespondingauthor{Yao Shu}{ yaoshu@hkust-gz.edu.cn}

  \icmlkeywords{Machine Learning, ICML}

  \vskip 0.3in
]



\printAffiliationsAndNotice{}  



\begin{abstract}
Large language models are increasingly deployed in multi-turn interactive settings where users or environments can iteratively provide lightweight feedback.
Unfortunately, optimizing such behavior presents a sharp dilemma in practice: online reinforcement learning is able to effectively address multi-turn dynamics but is prohibitively expensive due to the cost of generating full correction trajectories at every update, whereas offline supervised fine-tuning (SFT) is efficient but suffers from distribution shift and behavioral collapse.
To this end, we novelly propose \name{} (\textit{\textbf{D}ecoupled \textbf{R}ollouts and \textbf{I}mportance-Weighted \textbf{F}ine-\textbf{T}uning}), a framework that operationalizes the theoretical insight that the KL-regularized RL objective is equivalent to importance-weighted supervised learning.
\name{} decouples rollout from optimization by sampling offline interaction trajectories from a fixed reference policy, deriving return-based importance weights, and optimizing the policy via weighted SFT on the resulting dataset.
Empirically, we demonstrate that \name{} matches or exceeds the performance of multi-turn reinforcement learning baselines while maintaining the training efficiency and simplicity of standard supervised fine-tuning. Code is available at \url{https://github.com/2020-qqtcg/DRIFT}.
\end{abstract}

\section{Introduction}
Large language models (LLMs) \citep{guo2025deepseek, qwen2025qwen25technicalreport, comanici2025gemini} have evolved from static query-response engines into interactive agents capable of advanced reasoning. While standard training pipelines focus predominantly on single-turn accuracy \citep{rafailov2023direct, bai2025online, zheng2025group}, real-world deployment necessitates multi-turn capabilities where users iteratively provide feedback to guide the model \citep{li2025beyond, laban2025llms}. However, models trained strictly on single-turn data often exhibit fragility when confronted with negative feedback \citep{gao2024regressing, kumar2024training, zhang2025understanding, liu2025simple}, frequently repeating errors or degrading in performance after a revision attempt. As shown in Figure~\ref{fig:communicate}, effectively leveraging lightweight feedback signals (e.g., ``Incorrect, please try again'') to improve robust multi-turn reasoning remains a critical open challenge.

\begin{figure}[t]
\vskip 0.2in
\begin{center}
\centerline{\includegraphics[width=0.9\columnwidth]{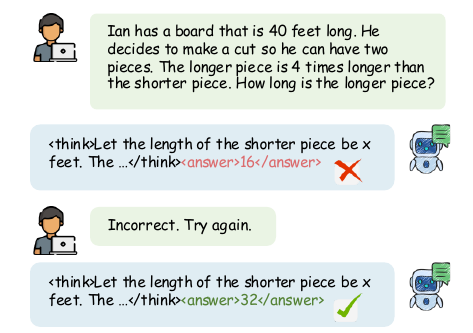}}
\caption{Multi-turn interaction. The user engages in a dialogue with the LLM. If the LLM provides an incorrect response, the user offers simple feedback to point out the error. The LLM then re-attempts the task until a correct answer is generated or the maximum number of turns is reached.}
\label{fig:communicate}
\end{center}
\vskip -0.2in
\vspace{-16pt}
\end{figure}

Current approaches to multi-turn optimization face a sharp dilemma between effectiveness and efficiency. On one hand, Supervised Fine-Tuning (SFT) on offline correction trajectories is sample-efficient but often fails to learn genuine correction policies. As noted by \citet{kumar2024training}, naive SFT suffers from distribution shift and behavioral collapse, where models over-optimize for first-turn accuracy while failing to produce meaningful edits in subsequent turns. On the other hand, Online Reinforcement Learning (RL) \citep{gao2024regressing, kumar2024training, liu2025simple} approaches like PPO \citep{schulman2017proximal, ouyang2022training} or GRPO \citep{shao2024deepseekmath, guo2025deepseek} address these distribution issues but incur a prohibitive computational cost. Unlike single-turn settings, multi-turn optimization requires generating full interaction trajectories for every policy update. As illustrated in Figure~\ref{fig:gpu_time}, this rollout cost scales poorly with interaction length, making standard online RL hard for training on multi-turn reasoning tasks.


We address this bottleneck by establishing a fundamental connection between the KL-regularized RL objective and weighted supervised learning. Our core insight is that the gradient of the online RL objective can be approximated using offline trajectories sampled from a fixed reference policy, provided they are re-weighted by their exponentiated rewards. This theoretical equivalence implies that the expensive rollout generation can be completely \textit{decoupled} from the policy optimization process. By shifting the computational burden to an offline, parallelizable generation phase, we can project the benefits of online RL into a high-throughput supervised training framework without the complexity of online interactions.

Building on this insight, we propose \name{} (\textit{\textbf{D}ecoupled \textbf{R}ollouts and \textbf{I}mportance-Weighted \textbf{F}ine-\textbf{T}uning}), a practical framework for verifiable multi-turn optimization. \name{} operates in two distinct stages: (1) an offline rollout stage where interaction trajectories are collected under a reference policy, and (2) a trajectory-weighted SFT stage where the model is optimized using our derived importance weights. This design enables \name{} to match the asymptotic performance of online RL baselines while retaining the computational efficiency of standard SFT.

Our contributions are summarized as follows:
\begin{itemize}
    \item We propose \name{}, a method that decouples rollout from optimization to realize an RL objective via weighted SFT, thereby enabling efficient training under multi-turn interaction protocols.
    \item We provide theoretical results demonstrating that our weighted objective is equivalent to the underlying KL-regularized RL objective, ensuring the effectiveness and stability of the optimization.
    \item Extensive experiments show that \name{} achieves performance comparable to or better than online RL baselines across mathematical and general reasoning benchmarks while offering substantially higher training efficiency.
\end{itemize}

\section{Problem Setup}
\label{sec:problem_setup}

We formulate the multi-turn answer correction task as a finite-horizon Markov Decision Process (MDP), defined by the tuple $\mathcal{M} = (\mathcal{S}, \mathcal{A}, \mathcal{P}, \mathcal{R}, \gamma, T)$, where $T$ represents the maximum turn budget. We detail the components and the underlying assumptions below.

\textbf{States and Actions.}
Let $\mathcal{V}$ denote the vocabulary. The action space $\mathcal{A} = \mathcal{V}^*$ consists of sequences of tokens generated by the model. At turn $t$, the action $y_t \in \mathcal{A}$ represents the model's response.
The state space $\mathcal{S}$ represents the interaction history. Given an initial problem prompt $x_1$, the state at turn $t$ is defined as the full interaction history up to that point: $x_t = (x_1, y_1, f_1, \dots, y_{t-1}, f_{t-1})$, where $f_i$ denotes the feedback received at turn $i$.
Since Large Language Models are autoregressive and condition on the entire input context, treating the complete interaction history $x_t$ as the state ensures that the Markov property holds strictly, as $x_t$ contains all sufficient statistics for the next transition.

\textbf{Transition Dynamics.}
The transition function $\mathcal{P}: \mathcal{S} \times \mathcal{A} \to \mathcal{S}$ is deterministic in our setting. A verifier function $\mathcal{V}(y_t)$ evaluates the correctness of the response $y_t$.
If $y_t$ is correct, the episode terminates successfully. If $y_t$ is incorrect and $t < T$, the environment transitions to $x_{t+1}$ by appending a lightweight feedback message $f$ (e.g., ``Incorrect, please try again''):
\begin{equation}
    x_{t+1} = \text{concat}(x_t, y_t, f).
\end{equation}
We assume a deterministic transition dynamics with fixed feedback $f$. This assumption aligns with standard evaluation protocols in reasoning benchmarks \citep{liu2025simple, kumar2024training}, where the user provides consistent, rigorous feedback to elicit self-correction without introducing stochastic user noise.

\textbf{Objective.}
Let $\pi_\theta(y|x)$ be the parameterized policy initialized from a reference model $\pi_{\text{ref}}$. A trajectory is a sequence $\tau = (x_1, y_1, \dots, x_{L}, y_{L})$, where $L \le T$ is the effective episode length. We define a discounted trajectory return:
\begin{equation}
    R(\tau) \triangleq \sum_{t=1}^{L} \gamma^{t-1} r(x_t, y_t), \quad \gamma \in (0, 1),
\end{equation}
where $r(x_t, y_t) = 1$ if $y_t$ is correct and $0$ otherwise. The discount factor $\gamma$ strictly penalizes delayed success, incentivizing the model to correct errors as early as possible.
Our goal is to optimize the standard KL-regularized RL objective \citep{ouyang2022training, rafailov2023direct}, which balances maximizing the expected return against maintaining fidelity to the reference policy:
\begin{equation}
\begin{aligned}
\max_{\theta}\; J(\theta)
&\triangleq
\mathbb{E}_{\tau \sim p_\theta}\!\left[R(\tau)\right] \\
&\quad - \beta\,
\mathrm{KL}\!\left(
p_\theta(\cdot \mid x)
\,\|\, 
p_{\mathrm{ref}}(\cdot \mid x)
\right).
\end{aligned}
\label{eq:rl_objective}
\end{equation}
where $\beta > 0$ controls the strength of the regularization, and $p_\theta(\tau\mid x)$ and $p_{\text{ref}}(\tau\mid x)$ are the trajectory distributions defined by $\pi_{\theta}$ and $\pi_{\text{ref}}$, respectively. This formulation grounds our multi-turn correction problem as a specific instance of return maximization under distribution constraints. However, optimizing Eq.~\eqref{eq:rl_objective} with standard online RL requires generating fresh on-policy multi-turn trajectories from $p_\theta$ at every update, so the rollout cost scales with the interaction horizon.

\section{From KL-Regularized RL to Weighted SFT}
\label{sec:theory_bridge}

In this section, we establish a fundamental equivalence between the online KL-regularized reinforcement learning objective and importance-weighted supervised fine-tuning.

\subsection{The Optimal Trajectory Distribution}
To manipulate this objective effectively, it is instructive to first abstract away the parametric constraints and identify the theoretically optimal distribution $p^\star(\cdot|x)$ that maximizes Eq.~\eqref{eq:rl_objective} over the space of all valid probability distributions.

\begin{theorem}[Optimal Trajectory Distribution]
\label{thm:optimal_traj}
\textit{\fontfamily{ppl}\selectfont
For a fixed prompt $x$ and bounded return $R(\tau)$, the variational problem corresponding to the RL objective admits a unique closed-form maximizer $p^\star(\tau\mid x)$, defined as an exponential tilting of the reference distribution:
\begin{align}
p^\star(\tau\mid x) = \frac{1}{Z(x)}\, p_{\text{\normalfont ref}}(\tau\mid x)\, \exp\!\left(\frac{R(\tau)}{\beta}\right),
\label{eq:pstar}
\end{align}
where $Z(x) \triangleq \mathbb{E}_{\tau\sim p_{\text{\normalfont ref}}(\cdot\mid x)}[\exp(R(\tau)/\beta)]$ is the prompt-dependent partition function.}
\end{theorem}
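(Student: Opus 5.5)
The plan is the standard Gibbs variational argument: rewrite the objective as a negative KL divergence to $p^\star$ plus a constant. Fix $x$ and consider the functional
\[
\mathcal{J}(p) = \mathbb{E}_{\tau\sim p}[R(\tau)] - \beta\,\mathrm{KL}\bigl(p \,\|\, p_{\text{ref}}(\cdot\mid x)\bigr)
\]
over all distributions $p$ on trajectories. Only $p \ll p_{\text{ref}}$ need be considered, since otherwise the KL term is $+\infty$ and $\mathcal{J}(p)=-\infty$.

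\textbf{Well-definedness.} Because $R$ is bounded (here $0\le R(\tau)\le \sum_{t}\gamma^{t-1}<\infty$), the quantity $\exp(R/\beta)$ lies in $[1, e^{\sup R/\beta}]$. Hence $Z(x)$ is finite and strictly positive, and $p^\star$ defined in \eqref{eq:pstar} is a genuine probability distribution, absolutely continuous with respect to $p_{\text{ref}}$ with bounded density ratio.

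\textbf{Key identity.} For any $p \ll p_{\text{ref}}$ with finite KL, write $\log\frac{p}{p_{\text{ref}}} = \log\frac{p}{p^\star} + \log\frac{p^\star}{p_{\text{ref}}}$ and substitute $\log\frac{p^\star}{p_{\text{ref}}} = R/\beta - \log Z(x)$. Taking expectations under $p$ gives
\[
\mathrm{KL}(p\|p_{\text{ref}}) = \mathrm{KL}(p\|p^\star) + \tfrac{1}{\beta}\mathbb{E}_p[R] - \log Z(x).
\]
Multiplying by $-\beta$ and adding $\mathbb{E}_p[R]$ yields
\[
\mathcal{J}(p) = \beta\log Z(x) - \beta\,\mathrm{KL}(p\|p^\star).
\]
The expectation of $R$ is finite by boundedness, so splitting the expectations is legitimate.

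\textbf{Conclusion.} By Gibbs' inequality, $\mathrm{KL}(p\|p^\star)\ge 0$, with equality if and only if $p=p^\star$ almost everywhere. Therefore $p^\star$ attains the supremum $\beta\log Z(x)$, and it is the unique maximizer. Uniqueness is inherited from the strict positivity of KL away from equality.

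The main obstacle is measure-theoretic bookkeeping rather than the algebra. One must treat the edge cases where $\mathrm{KL}(p\|p_{\text{ref}})=\infty$ or $p\not\ll p_{\text{ref}}$, and justify splitting expectations when only the KL is known to be finite. Boundedness of $R$ handles all of this, since it makes $\log(p^\star/p_{\text{ref}})$ bounded, so $\mathrm{KL}(p\|p^\star)$ and $\mathrm{KL}(p\|p_{\text{ref}})$ are finite or infinite together. In the present setting trajectories are finite token sequences, so the space is countable and densities are probability mass functions, which makes the argument essentially elementary.

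As an optional sanity check, a Lagrangian derivation reaches the same formula. Set the pointwise derivative $R - \beta(\log\frac{p}{p_{\text{ref}}}+1) - \lambda$ to zero and normalize. The KL identity above is what makes the argument rigorous and gives global optimality and uniqueness.
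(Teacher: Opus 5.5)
Your proof is correct, but it takes a different route from the paper's.

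\textbf{The paper's route.} The paper argues constructively. It forms a Lagrangian with only the normalization constraint, dropping $p(\tau)\ge 0$ and checking it afterwards. It solves the pointwise stationarity condition to get $p = C\,p_{\mathrm{ref}}\exp(R/\beta)$ and fixes $C = 1/Z(x)$ by normalization. It then appeals to strict concavity of $p\mapsto \mathbb{E}_p[R]-\beta\,\mathrm{KL}(p\|p_{\mathrm{ref}})$ on the simplex to conclude that this stationary point is the unique global maximizer.

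\textbf{Your route.} You instead verify the stated candidate through the decomposition $\mathcal{J}(p)=\beta\log Z(x)-\beta\,\mathrm{KL}(p\|p^\star)$. This is exactly the identity the paper proves separately as Theorem~\ref{thm:rl_equivalence}, there only for $p=p_\theta$.

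\textbf{What each buys.} The paper's argument discovers the exponential-tilt form rather than presupposing it. However, it treats some steps informally: the inequality constraints are set aside, and on a countably infinite trajectory set, passing from pointwise stationarity plus strict concavity to global optimality needs an extra (if standard) justification. Your argument needs the answer in hand, which the statement supplies. In exchange it is elementary and airtight:
\begin{itemize}
\item it gives the optimal value $\beta\log Z(x)$ explicitly;
\item it handles the $p\not\ll p_{\mathrm{ref}}$ and infinite-KL cases cleanly;
\item it gets uniqueness directly from the equality case of Gibbs' inequality;
\item it yields Theorem~\ref{thm:rl_equivalence} as an immediate corollary, so one identity proves both results.
\end{itemize}

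\textbf{A minor slip.} The bound $\exp(R/\beta)\ge 1$ assumes $R\ge 0$. That holds for the discounted return of Section~\ref{sec:problem_setup}, but not for the shaped return the algorithm actually uses, which can be negative. Under the theorem's hypothesis of bounded $R$, write $\exp(R/\beta)\in[e^{\inf R/\beta},e^{\sup R/\beta}]$ instead; nothing else in your argument changes.
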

This result is pivotal because it reveals that the ideal correction behavior is not arbitrary; it essentially re-weights the reference behavior such that trajectories with higher returns $R(\tau)$ are assigned exponentially higher probability mass. The parameter $\beta$ serves as the temperature of this distribution, controlling the sharpness of the tilt towards high-return regions.

\subsection{Connecting RL to Distribution Matching}
Having identified the theoretically optimal target $p^\star$, we now establish the fundamental connection between the standard reinforcement learning objective and distribution matching. While $p^\star$ was derived as the maximizer of the variational problem, it is not immediately obvious how the parametric RL objective $J(\theta)$ relates to this distribution. The following theorem bridges this gap by explicitly reframing the reward maximization problem as a divergence minimization task.

\begin{theorem}[RL as Reverse-KL Minimization]
\label{thm:rl_equivalence}
\textit{\fontfamily{ppl}\selectfont
Let $J(\theta)$ denote the KL-regularized objective defined in Eq.~\eqref{eq:rl_objective}, and let $Z(x)$ be the partition function independent of $\theta$. The RL objective satisfies the following identity:
\begin{equation}
J(\theta) = \beta \log Z(x) - \beta \, \text{\normalfont KL}(p_\theta(\cdot|x) \,\|\, p^\star(\cdot|x)).
\end{equation}
Since $\beta \log Z(x)$ is constant with respect to $\theta$, maximizing the expected return is mathematically equivalent to minimizing the Reverse-KL divergence between the policy $p_\theta$ and the optimal distribution $p^\star$:
\begin{equation}
\label{eq:reverse_kl_optim}
\arg\max_{\theta} J(\theta) = \arg\min_{\theta} \text{\normalfont KL}\left(p_\theta(\cdot\mid x) \,\|\, p^\star(\cdot\mid x)\right).
\end{equation}}
\end{theorem}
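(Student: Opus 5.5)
The identity follows by expanding $\mathrm{KL}(p_\theta\|p^\star)$ with the closed form of $p^\star$ from Theorem~\ref{thm:optimal_traj}. The argmax statement then comes from the fact that $\beta>0$ and $\log Z(x)$ does not depend on $\theta$.

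First we rewrite the log-density ratio. Fix $x$ and take the logarithm of Eq.~\eqref{eq:pstar}:
\begin{equation*}
\log p^\star(\tau\mid x) = \log p_{\mathrm{ref}}(\tau\mid x) + \frac{R(\tau)}{\beta} - \log Z(x).
\end{equation*}
This holds for every trajectory $\tau$ in the support of $p_{\mathrm{ref}}$. Since $R$ is bounded ($0\le R(\tau)\le \sum_{t\le T}\gamma^{t-1}$), the exponential weight is strictly positive and finite. Hence $p^\star$ and $p_{\mathrm{ref}}$ are mutually absolutely continuous, and $0<Z(x)<\infty$.

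Next we expand the reverse KL:
\begin{align*}
\mathrm{KL}(p_\theta\|p^\star)
&= \mathbb{E}_{\tau\sim p_\theta}\!\left[\log\frac{p_\theta(\tau\mid x)}{p_{\mathrm{ref}}(\tau\mid x)}\right] - \frac{1}{\beta}\mathbb{E}_{\tau\sim p_\theta}[R(\tau)] + \log Z(x) \\
&= \mathrm{KL}(p_\theta\|p_{\mathrm{ref}}) - \frac{1}{\beta}\mathbb{E}_{p_\theta}[R(\tau)] + \log Z(x).
\end{align*}
Multiplying by $-\beta$ and rearranging gives
\begin{equation*}
\mathbb{E}_{p_\theta}[R] - \beta\,\mathrm{KL}(p_\theta\|p_{\mathrm{ref}}) = \beta\log Z(x) - \beta\,\mathrm{KL}(p_\theta\|p^\star),
\end{equation*}
and the left-hand side is exactly $J(\theta)$ from Eq.~\eqref{eq:rl_objective}.

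For the argmax, $Z(x)=\mathbb{E}_{p_{\mathrm{ref}}}[\exp(R/\beta)]$ involves only $p_{\mathrm{ref}}$ and $R$, so it does not depend on $\theta$. Because $\beta>0$, $J(\theta)$ is a strictly decreasing affine function of $\mathrm{KL}(p_\theta\|p^\star)$. The two problems therefore have the same optimizer set, which gives Eq.~\eqref{eq:reverse_kl_optim}.

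The only delicate point is well-definedness. The split of $\mathbb{E}_{p_\theta}[\log(p_\theta/p^\star)]$ into separate expectations is legitimate because $R$ is bounded, so $\mathbb{E}_{p_\theta}[R]$ is finite. The remaining term $\mathrm{KL}(p_\theta\|p_{\mathrm{ref}})$ may be $+\infty$, for example if $p_\theta$ is not absolutely continuous with respect to $p_{\mathrm{ref}}$. In that case mutual absolute continuity of $p^\star$ and $p_{\mathrm{ref}}$ forces $\mathrm{KL}(p_\theta\|p^\star)=+\infty$ as well, so both sides of the identity equal $-\infty$ and the identity holds in the extended reals. Such $\theta$ are never maximizers of either problem, so they do not affect the argmax equivalence.

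Finally, when the trajectory space is discrete (finite token sequences), all expectations are sums and the manipulations above are termwise. No measure-theoretic subtlety arises beyond the absolute-continuity remark.
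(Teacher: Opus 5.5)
Your proposal is correct and follows essentially the same route as the paper's proof. You write $\log\bigl(p^\star/p_{\mathrm{ref}}\bigr) = R/\beta - \log Z(x)$, expand $\mathrm{KL}(p_\theta\|p^\star)$ under $p_\theta$ as $\mathrm{KL}(p_\theta\|p_{\mathrm{ref}}) - \frac{1}{\beta}\mathbb{E}_{p_\theta}[R] + \log Z(x)$, rearrange, and handle $p_\theta \not\ll p_{\mathrm{ref}}$ by noting both sides equal $-\infty$; you also spell out the argmax equivalence from $\beta>0$ and the $\theta$-independence of $Z(x)$, which the paper leaves implicit.
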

Theorem \ref{thm:rl_equivalence} is significant because it explicitly characterizes the ideal optimization path: the policy $p_\theta$ should strive to cover the mode of $p^\star$. Standard online RL algorithms, such as PPO, effectively minimize this Reverse-KL divergence. However, a critical bottleneck arises from the direction of the divergence. The term $\text{\normalfont KL}(p_\theta \| p^\star)$ involves an expectation under the \textit{current} policy $p_\theta$ (i.e., $\mathbb{E}_{\tau \sim p_\theta}[\dots]$). Estimating its gradient requires generating fresh rollouts from $p_\theta$ at every optimization step, incurring the high computational costs we aim to avoid.

To overcome this, we propose utilizing the \textit{Forward-KL divergence}, $\text{\normalfont KL}(p^\star \,\|\, p_\theta)$, as a surrogate objective. This substitution allows us to shift the expectation from the changing policy $p_\theta$ to the fixed optimal distribution $p^\star$. \camera{We formalize this substitution in two complementary ways: an exact global statement under realizability, and a local surrogate guarantee that does not require realizability.}
\begin{lemma}[Forward/Reverse KL Duality]
\label{lem:kl_duality}
\textit{\fontfamily{ppl}\selectfont
Assume the policy class $\Pi_\theta$ is sufficiently expressive to contain the optimal distribution (i.e., $\exists \theta^\star$ such that $p_{\theta^\star} = p^\star$). Under this realizability assumption, the set of global minimizers for the Forward-KL and Reverse-KL divergences coincide:
\begin{equation}
\begin{split}
\left\{ \theta : p_\theta = p^\star \right\} & = \arg\min_{\theta} \text{\normalfont KL}(p_\theta \,\|\, p^\star) \\
& = \arg\min_{\theta} \text{\normalfont KL}(p^\star \,\|\, p_\theta).
\end{split}
\end{equation}
}
\end{lemma}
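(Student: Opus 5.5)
The plan is to use only the two defining properties of KL divergence: it is nonnegative (Gibbs' inequality), and it vanishes exactly when its two arguments coincide. For discrete trajectory spaces, "coincide" means equality as distributions; in general it means equality almost everywhere. The trajectory space here is countable, since $\mathcal{A} = \mathcal{V}^*$ and $T$ is finite, so equality almost everywhere is just equality. Throughout, the prompt $x$ is fixed and both divergences are read conditionally on $x$.

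First, I will record Gibbs' inequality for both directions. For any $\theta$, $\mathrm{KL}(p_\theta \,\|\, p^\star) \ge 0$ and $\mathrm{KL}(p^\star \,\|\, p_\theta) \ge 0$. Each is zero if and only if $p_\theta = p^\star$. I would justify this briefly: apply Jensen's inequality to the strictly convex function $-\log$, obtaining $-\mathrm{KL}(q\,\|\,p) = \mathbb{E}_q[\log(p/q)] \le \log \mathbb{E}_q[p/q] \le 0$. Equality in both steps forces $p/q$ to be constant on the support of $q$, and forces that support to carry all the mass of $p$; together these give $q = p$.

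Second, I will invoke realizability. Because some $\theta^\star$ satisfies $p_{\theta^\star} = p^\star$, both divergences attain the value $0$ at $\theta^\star$. Combined with nonnegativity, the minimum value of each objective over $\Theta$ is exactly $0$. Therefore $\arg\min_\theta \mathrm{KL}(p_\theta\,\|\,p^\star) = \{\theta : \mathrm{KL}(p_\theta\,\|\,p^\star) = 0\}$, which by the equality case is $\{\theta : p_\theta = p^\star\}$. The same argument applies verbatim to the forward direction, so the two argmin sets coincide with each other and with the stated set.

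The only delicate point is the equality case when supports differ or a divergence is infinite. For the reverse KL, $\mathrm{KL}(p_\theta\,\|\,p^\star)$ is finite only if $p_\theta \ll p^\star$. This is automatic whenever $p_\theta \ll p_{\mathrm{ref}}$, because Theorem~\ref{thm:optimal_traj} shows that $p^\star$ and $p_{\mathrm{ref}}$ have identical support: $\exp(R/\beta) \in [1, e^{1/(\beta(1-\gamma))}]$ is bounded and positive. For the forward KL, $\mathrm{KL}(p^\star\,\|\,p_\theta)$ may be $+\infty$ for some $\theta$. I would handle this by using the convention $\mathrm{KL} \in [0,\infty]$. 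Such $\theta$ are then never minimizers, because the value $0$ is attained, so infinite values do not affect the argument. I expect this bookkeeping, rather than any real inequality, to be the main thing to state carefully. The rest is immediate from Gibbs' inequality.
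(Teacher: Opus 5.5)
Your proposal is correct and follows essentially the same route as the paper's proof: nonnegativity of KL, the value $0$ attained at $\theta^\star$ by realizability, and the equality case of Gibbs' inequality to identify each argmin set with $\{\theta : p_\theta = p^\star\}$, with $+\infty$ values excluded from the minimizers exactly as you describe. The paper's proof also has a third step showing that forward-KL minimization is equivalent to cross-entropy minimization; the paper needs this for the subsequent surrogate objective, but the lemma itself does not require it.
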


\camera{
Lemma~\ref{lem:kl_duality} gives an exact global guarantee, but its role is limited to the realizable case: it uses \(p^\star\in\Pi_\theta\) only to identify the global minimizers of the two KL directions. In finite-capacity models, \(p^\star\) may not be exactly realizable, and the global projections induced by Forward-KL and Reverse-KL need not coincide. We therefore complement Lemma~\ref{lem:kl_duality} with a local comparison result showing that the two KL objectives still have the same second-order geometry around \(p^\star\).
}

\begin{figure*}[t]
\begin{center}
\centerline{\includegraphics[width=\textwidth]{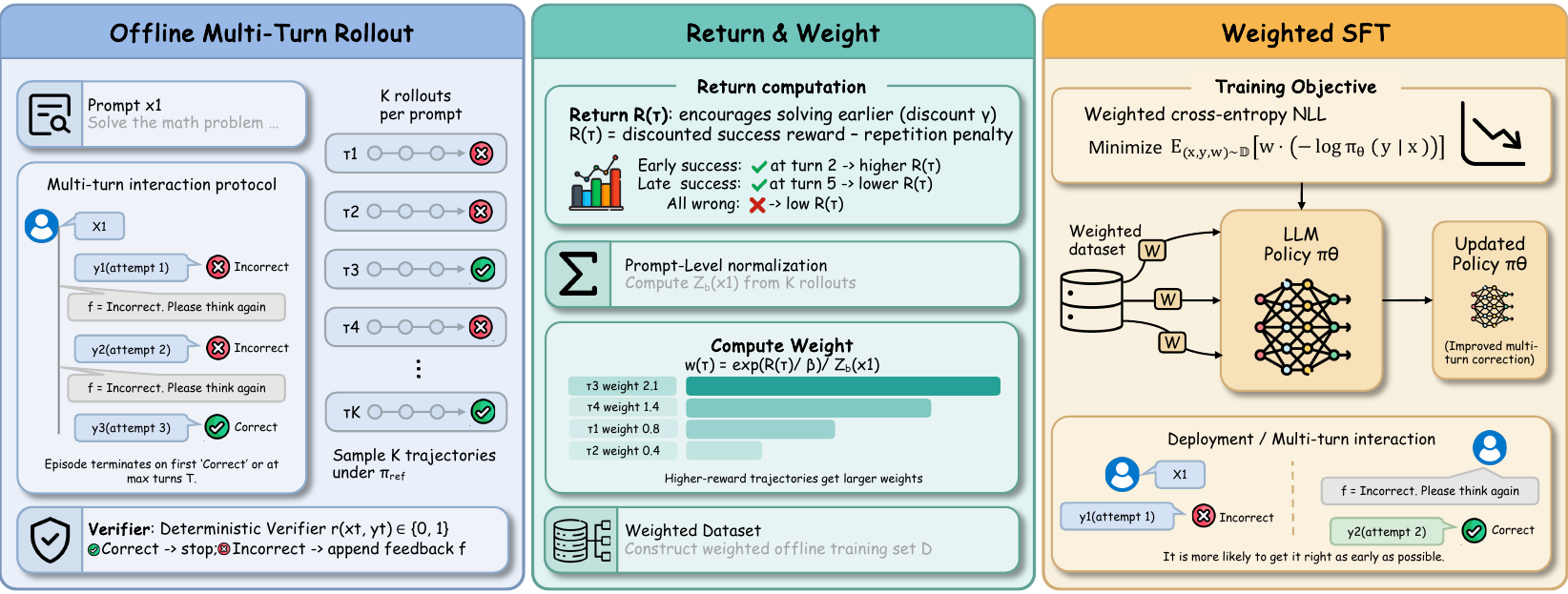}}
\caption{\name{} overall framework overview. \name{} consists of two stages: (1) an offline rollout stage, where a batch of trajectories is sampled once from the reference policy and trajectory weights are computed based on the return; and (2) a weighted supervised optimization stage, where the collected $(x,y,w)$ tuples are used for weighted SFT. This fully decouples rollout from training, enabling \name{} to achieve RL objectives with efficiency close to standard SFT.}
\label{fig:main}
\end{center}
\vspace{-25pt}
\end{figure*}

\camera{
\begin{lemma}[Local validity without realizability]
\label{lem:local-kl-surrogate}
Fix a prompt \(x\), and write
\[
P^\star = p^\star(\cdot\mid x),
\qquad
P_\theta = p_\theta(\cdot\mid x).
\]
Assume that \(P^\star\) has a finite effective support. Then there exist constants
\(\varepsilon_x>0\) and \(C_x<\infty\) such that, for any \(P_\theta\) on the same support with
\[
\mathrm{TV}(P_\theta,P^\star)\le \varepsilon_x,
\]
we have
\[
\left|
\mathrm{KL}(P_\theta\|P^\star)
-
\mathrm{KL}(P^\star\|P_\theta)
\right|
\le
C_x\,\mathrm{TV}(P_\theta,P^\star)^3 .
\]
Thus, even without realizability, Forward-KL and Reverse-KL share the same local second-order geometry around \(p^\star\).
\end{lemma}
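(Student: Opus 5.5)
We work on the finite effective support $S$ of $P^\star$ and set $q_i = P^\star(i)>0$, $p_i = P_\theta(i)$, and $\delta_i = p_i - q_i$, so that $\sum_i \delta_i = 0$ and $\mathrm{TV}(P_\theta,P^\star)=\tfrac12\sum_i|\delta_i|$. Let $q_{\min} = \min_{i\in S} q_i > 0$; this quantity exists precisely because the support is finite. Choose $\varepsilon_x = q_{\min}/4$. Then $\mathrm{TV}\le\varepsilon_x$ gives $|\delta_i|\le 2\,\mathrm{TV}\le q_{\min}/2$ for every $i$. Hence each ratio $u_i=\delta_i/q_i$ satisfies $|u_i|\le 1/2$, and every logarithm below stays in a region where its Taylor expansion converges uniformly.

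The core step is to write both divergences as sums over $S$ of a single scalar function of $u_i$, weighted by $q_i$:
\[
\mathrm{KL}(P_\theta\|P^\star)=\sum_i q_i(1+u_i)\log(1+u_i),\qquad
\mathrm{KL}(P^\star\|P_\theta)=-\sum_i q_i\log(1+u_i).
\]
We then expand $\phi_1(u)=(1+u)\log(1+u)=u+\tfrac{u^2}{2}-\tfrac{u^3}{6}+\dots$ and $\phi_2(u)=-\log(1+u)=-u+\tfrac{u^2}{2}-\tfrac{u^3}{3}+\dots$. On $|u|\le 1/2$, Taylor's theorem with Lagrange remainder gives $|\phi_j(u)-(\pm u)-u^2/2|\le K|u|^3$ for a universal constant $K$; the third derivatives $-1/(1+u)^2$ and $2/(1+u)^3$ are bounded by $16$ there. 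The linear terms carry opposite signs, but each sums to $\pm\sum_i q_iu_i=\pm\sum_i\delta_i=0$, so both vanish. The quadratic terms are identical and equal $\tfrac12\sum_i\delta_i^2/q_i$, the $\chi^2$ term, so they cancel in the difference. This cancellation is the content of ``same second-order geometry.''

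What is left is the cubic remainder. We bound
\[
\bigl|\mathrm{KL}(P_\theta\|P^\star)-\mathrm{KL}(P^\star\|P_\theta)\bigr|\le 2K\sum_i q_i|u_i|^3=2K\sum_i\frac{|\delta_i|^3}{q_i^2}\le\frac{2K}{q_{\min}^2}\Bigl(\sum_i|\delta_i|\Bigr)^3=\frac{16K}{q_{\min}^2}\,\mathrm{TV}^3 .
\]
The middle inequality uses $\sum_i|\delta_i|^3\le(\sum_i|\delta_i|)^3$. This gives $C_x=16K/q_{\min}^2$.

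The main obstacle is justifying uniform control of the remainder. It rests on the finite effective support, which ensures $q_{\min}>0$, and on the assumption that $P_\theta$ lives on the same support, so that neither KL is infinite and the small mass of $P_\theta$ outside $S$ is excluded. With infinite support, $q_{\min}=0$ and $|u_i|$ could not be bounded uniformly from TV alone. We would therefore state explicitly that the constants depend on $x$ only through $q_{\min}(x)$.
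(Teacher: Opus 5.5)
Your proposal is correct and follows the paper's proof essentially step for step. Both use the same relative perturbation $u=\delta/q$ and expand $(1+u)\log(1+u)$ and $-\log(1+u)$; the linear terms vanish because $\sum_i\delta_i=0$, the quadratic $\chi^2$-type terms are shared, and the cubic remainder is bounded via $q_{\min}^{-2}$ and $\sum_i|\delta_i|^3\le(\sum_i|\delta_i|)^3=8\,\mathrm{TV}^3$. The only difference is that you make the constants explicit ($\varepsilon_x=q_{\min}/4$, third derivatives bounded by $16$ on $|u|\le 1/2$, $C_x=16K/q_{\min}^2$) where the paper writes $O_\rho(u^3)$. As a harmless slip, $\phi_2'''(u)=-2/(1+u)^3$, but only its absolute value enters.
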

}

\camera{
Together, Lemmas~\ref{lem:kl_duality} and~\ref{lem:local-kl-surrogate} justify the use of Forward-KL as a training surrogate at two levels. In the realizable case, the substitution is exact at the level of global optima; in the non-realizable case, it remains locally faithful whenever \(p_\theta\) is close to \(p^\star\). We now exploit the computational advantage of the Forward-KL objective: its only \(\theta\)-dependent term is the cross-entropy under the fixed target distribution \(p^\star\). Hence,
}

\begin{equation}
\min_{\theta} \text{\normalfont KL}(p^\star \,\|\, p_\theta) \iff \max_{\theta} \mathbb{E}_{\tau \sim p^\star(\cdot|x)} [\log p_\theta(\tau|x)].
\label{eq:forward_likelihood}
\end{equation}

\subsection{Deriving the Offline Weighted Objective}
The challenge remains that we cannot sample directly from the optimal distribution $p^\star$ to compute this expectation. However, since $p^\star$ is absolutely continuous with respect to the reference policy $p_{\text{ref}}$, we can apply importance sampling to change the measure from $p^\star$ to $p_{\text{ref}}$. This transformation is the core engine of \name{}.
\begin{theorem}[Equivalence to Importance-Weighted SFT]
\label{thm:weighted_sft}
\textit{\fontfamily{ppl}\selectfont
By rewriting the expectation over $p^\star$ as an expectation over $p_{\text{\normalfont ref}}$ weighted by $w(\tau|x) = p^\star(\tau|x) / p_{\text{\normalfont ref}}(\tau|x)$, we obtain the following tractable objective:
\begin{equation}
\mathcal{L}(\theta) \triangleq \mathbb{E}_{\tau \sim p_{\text{\normalfont ref}}(\cdot|x)} \left[ - w(\tau|x) \log p_\theta(\tau|x) \right],
\label{eq:optimal_target_sft}
\end{equation}
where the importance weight for each trajectory is given by:
\begin{equation}
w(\tau|x) = \frac{\exp\left(R(\tau)/\beta\right)}{Z(x)}.
\end{equation}}
\end{theorem}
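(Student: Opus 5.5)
The argument chains the forward-KL reduction in Eq.~\eqref{eq:forward_likelihood} with a change of measure from $p^\star$ to $p_{\text{ref}}$, and then reads off the weight from Theorem~\ref{thm:optimal_traj}. Fix a prompt $x$. By Eq.~\eqref{eq:forward_likelihood}, minimizing $\mathrm{KL}(p^\star\|p_\theta)$ is the same as minimizing the cross-entropy $-\mathbb{E}_{\tau\sim p^\star(\cdot|x)}[\log p_\theta(\tau|x)]$. This holds because $\mathrm{KL}(p^\star\|p_\theta)=-H(p^\star)-\mathbb{E}_{p^\star}[\log p_\theta]$, and the entropy term does not depend on $\theta$. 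It therefore suffices to show that this cross-entropy equals $\mathcal{L}(\theta)$ in Eq.~\eqref{eq:optimal_target_sft}.

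\textbf{Change of measure.} First I would check absolute continuity, $p^\star(\cdot|x)\ll p_{\text{ref}}(\cdot|x)$. By Eq.~\eqref{eq:pstar}, $p^\star(\tau|x)=0$ whenever $p_{\text{ref}}(\tau|x)=0$, since $\exp(R/\beta)/Z(x)$ is finite. That factor is finite because $R(\tau)\in[0,\sum_{t}\gamma^{t-1}]$ is bounded, so $1\le Z(x)\le e^{1/((1-\gamma)\beta)}<\infty$. The Radon--Nikodym derivative is then
\[
w(\tau|x)=\frac{p^\star(\tau|x)}{p_{\text{ref}}(\tau|x)}=\frac{\exp(R(\tau)/\beta)}{Z(x)}
\]
on the support of $p_{\text{ref}}$, which is exactly the stated weight. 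Applying the identity $\mathbb{E}_{p^\star}[g]=\mathbb{E}_{p_{\text{ref}}}[w\,g]$ with $g=-\log p_\theta(\tau|x)$ gives $-\mathbb{E}_{p^\star}[\log p_\theta]=\mathcal{L}(\theta)$. Hence
\[
\mathrm{KL}(p^\star\|p_\theta)=\mathcal{L}(\theta)-H(p^\star),
\]
so the two objectives share minimizers and gradients.

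\textbf{Integrability of $g$ (the one real technical point).} If $p_\theta$ assigns zero mass somewhere on the support of $p^\star$, then $\log p_\theta=-\infty$. In that case both sides equal $+\infty$, and the identity still holds in the extended sense. Otherwise, since $w$ is bounded above, $\mathbb{E}_{p_{\text{ref}}}[w|\log p_\theta|]<\infty$ is equivalent to $\mathbb{E}_{p^\star}[|\log p_\theta|]<\infty$. Over a countable trajectory space this is just a rearrangement of a series of nonnegative terms, since $-\log p_\theta\ge 0$, so Tonelli's theorem justifies the rewriting with no further conditions.

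Finally, I would note that $Z(x)$ is a positive, $\theta$-independent constant for each prompt. Within a prompt it rescales $\mathcal{L}$ without changing its argmin. Across prompts it normalizes the weights so that $\mathbb{E}_{p_{\text{ref}}}[w]=1$, which is what makes $w$ a genuine density ratio.
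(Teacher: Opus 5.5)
Your proposal is correct and is essentially the paper's proof. You identify $w=p^\star/p_{\text{ref}}=\exp(R/\beta)/Z(x)$ from Theorem~\ref{thm:optimal_traj}, then apply a termwise change of measure to $g_\theta=-\log p_\theta$ over the countable trajectory set, allowing $g_\theta$ to take the value $+\infty$ as the paper does.

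I have two minor remarks. First, boundedness of $w$ is not needed for your integrability equivalence, because $\sum_\tau p_{\text{ref}}(\tau)\,w(\tau)\,|\log p_\theta(\tau)|$ and $\sum_\tau p^\star(\tau)\,|\log p_\theta(\tau)|$ are the same series term by term. Second, your bound $Z(x)\ge 1$ relies on $R\ge 0$, which fails for the shaped return of Eq.~\eqref{eq:reward_shaped}; this does no harm, since your argument only uses $0<Z(x)<\infty$.
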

This formulation fully decouples the rollout phase from the optimization phase. We can effectively approximate $\mathbb{E}_{\tau \sim p_{\text{ref}}}$ using a static dataset of trajectories sampled offline from the reference policy. The learning process then reduces to a standard supervised fine-tuning loop where each trajectory loss is scaled by its normalized exponential return. For autoregressive language models, the term $\log p_\theta(\tau|x)$ naturally decomposes into a sum of token-level log-probabilities, allowing Eq.~\eqref{thm:weighted_sft} to be implemented by simply applying the scalar weight $w(\tau|x)$ to the cross-entropy loss of every token in the response.

\section{The \name{} Algorithm}
\label{sec:method}

\subsection{Overview of \name{}}
\label{sec:method_overview}

The theoretical analysis in Section~\ref{sec:theory_bridge} establishes that the complex online RL objective can be equivalently transformed into a tractable weighted supervised learning problem. Guided by this theoretical equivalence, we propose \name{}, a framework that operationalizes this insight to fundamentally decouple interaction rollouts from policy optimization. As illustrated in Figure~\ref{fig:main}, \name{} effectively bridges the gap between the rigorous RL objective and practical supervised fine-tuning through two distinct stages: 

\textbf{Stage 1: Offline Trajectory Generation.} Instead of expensive online interactions during training, \name{} samples trajectories offline from a fixed reference policy $\pi_{\text{\normalfont ref}}$. Crucially, to align with the optimal RL objective derived in Eq.~\eqref{eq:optimal_target_sft}, we concurrently compute a scalar importance weight $w(\tau)$ for each trajectory. This process, detailed in Algorithm~\ref{alg:offline_generation}, efficiently transforms return signals into a high-quality, importance-weighted dataset. 

\textbf{Stage 2: Weighted Supervised Optimization.} The policy $\pi_\theta$ is then updated via weighted supervised fine-tuning on this pre-collected dataset. \name{} avoids the extra overhead of repeatedly performing rollouts during the updates of online RL methods. This is particularly critical in multi-turn settings, where the cost of a single rollout grows with the interaction horizon and can be substantially higher than that of a single-turn rollout. Meanwhile, the theoretical guarantees of the KL-regularized objective enable \name{} to achieve performance close to strong multi-turn RL baselines.


\subsection{Stage 1: Offline Trajectory Generation}
\label{sec:method_stage1}

The primary goal of this stage is to construct a high-quality dataset $\mathcal{D}$ that allows us to approximate expectations under the optimal distribution $p^\star$ using samples from the reference policy $\pi_{\text{\normalfont ref}}$. As detailed in Lines 1--17 of Algorithm~\ref{alg:offline_generation}, this process involves three key components:

\textbf{Interaction Protocol.} 
For each problem prompt $x_1$, we sample $K$ trajectories $\{\tau^{(k)}\}_{k=1}^K$ from the reference policy $\pi_{\text{\normalfont ref}}$ under a deterministic multi-turn protocol. At any turn $t < T$, if the response $y_t$ is incorrect, a fixed lightweight feedback $f$ is deterministically appended to construct the next state:
\begin{equation}
\label{eq:transition}
x_{t+1} = \text{\normalfont concat}(x_t, y_t, f).
\end{equation}
The interaction terminates immediately upon generating a correct answer or reaching the maximum turn budget $T$. This protocol ensures that the collected trajectories explicitly capture the model ability to recover from errors under fixed feedback.

\textbf{Return Computation.} 
To strictly align the model behavior with the desired multi-turn outcome, we design a shaped return $R(\tau)$ that incorporates both efficiency and diversity. Let $L$ denote the effective length of trajectory $\tau$. The return is defined as:
\begin{equation}
\label{eq:reward_shaped}
R(\tau) \triangleq \mathbb{I}(y_L \text{ is correct}) \cdot \gamma^{L-1} - \lambda \left( 1 - \frac{E(\tau)}{L} \right),
\end{equation}
where $\gamma \in (0,1)$ serves as a discount factor to prioritize solving problems in fewer turns, and the second term imposes a penalty based on the unique response count $E(\tau)$ to discourage repetitive errors, a design inspired by UFO \citep{liu2025simple}. We analyze its effect in Section~\ref{sec:abation_reward}.

\textbf{Importance Weight Calculation.} 
To operationalize the theoretical equivalence established in Theorem~\ref{thm:weighted_sft}, we assign a scalar importance weight $w^{(k)}$ to each trajectory. This weight acts as the Radon-Nikodym derivative that adjusts the sampling distribution:
\begin{equation}
\label{eq:weight_calculation}
\begin{split}
w^{(k)} & \leftarrow \frac{\exp\left(R(\tau^{(k)})/\beta\right)}{\widehat{Z}(x_1)}, \\
\widehat{Z}(x_1) & \triangleq \frac{1}{K}\sum_{j=1}^K \exp\left(\frac{R(\tau^{(j)})}{\beta}\right).
\end{split}
\end{equation}
Here, the partition function estimate $\widehat{Z}(x_1)$ provides prompt-level normalization. This ensures that the weights reflect the relative quality of trajectories within the specific prompt's solution space, stabilizing the variance of the importance sampling estimator.

\textbf{Terminal-Step Retention.} 
Finally, to construct the dataset $\mathcal{D}$, we retain only the terminal turn $(x_{L}, y_{L})$ of each trajectory paired with its weight $w^{(k)}$.
This is a protocol-specific approximation to the full-trajectory weighted objective in Theorem~\ref{thm:weighted_sft}, rather than an exact implementation of that objective.
Under the stop-on-success protocol, intermediate turns in a successful trajectory are verifier-rejected attempts generated under fixed negative feedback.
Applying the same large trajectory weight to all such intermediate turns can therefore assign positive credit to responses that did not realize success.
Terminal-only retention still conditions on the full interaction history through $x_L$, but concentrates the trajectory weight on the final response that determines the verifier outcome.
We analyze and empirically validate this approximation in Section~\ref{subsec:terminal_approx}.

\begin{algorithm}[t]
\caption{Trajectory Generation in \name{}}
\label{alg:offline_generation}
\begin{algorithmic}[1]
\REQUIRE Prompts $\{x_1\}$, reference policy $\pi_{\text{\normalfont ref}}$, parameters $K, \beta$, feedback mechanism
\ENSURE Weighted dataset $\mathcal{D}$
\STATE Initialize $\mathcal{D}\leftarrow \emptyset$
\FOR{each prompt $x_1$}
    \STATE \textcolor{gray}{\textit{// Step 1: Sample multiple trajectories}}
    \STATE Sample $K$ trajectories $\{\tau^{(k)}\}_{k=1}^K$ from $\pi_{\text{\normalfont ref}}$ given $x_1$
    
    \STATE \textcolor{gray}{\textit{// Step 2: Compute returns and normalization}}
    \FOR{$k=1$ to $K$}
        \STATE Compute return $R(\tau^{(k)})$ using Eq.~\eqref{eq:reward_shaped}
    \ENDFOR
    \STATE Compute $\widehat{Z}(x_1) = \frac{1}{K}\sum_{k=1}^K \exp(R(\tau^{(k)})/\beta)$

    \STATE \textcolor{gray}{\textit{// Step 3: Assign weights and store the final turn}}
    \FOR{$k=1$ to $K$}
        \STATE Calculate weight $w^{(k)} \leftarrow \frac{\exp(R(\tau^{(k)})/\beta)}{\widehat{Z}(x_1)}$
        \STATE Let $L^{(k)}$ denote the effective length of $\tau^{(k)}$
        \STATE Let $(x_{L^{(k)}}, y_{L^{(k)}})$ be the final turn of $\tau^{(k)}$
        \STATE $\mathcal{D} \leftarrow \mathcal{D} \cup \{(x_{L^{(k)}}, y_{L^{(k)}}, w^{(k)})\}$
    \ENDFOR
\ENDFOR
\STATE \textbf{return} $\mathcal{D}$
\end{algorithmic}
\end{algorithm}

\subsection{Stage 2: Weighted Supervised Optimization}
\label{sec:method_stage2}

Once the importance-weighted dataset $\mathcal{D}$ is constructed, the second stage focuses on distilling the optimal behavior into the parameterized policy $\pi_\theta$. This phase effectively converts the reinforcement learning problem into a standard supervised learning task, characterized by the following three aspects:

\textbf{Optimization Objective.} 
Intuitively, we aim to optimize the policy as if it were learning from the theoretical optimal distribution $p^\star$. Since we only have access to trajectories from the reference policy, the importance weights act as a statistical correction mechanism, effectively reshaping the offline data to serve as a proxy for the ideal behavior. Formally, we optimize $\pi_\theta$ by minimizing the weighted negative log-likelihood:
\begin{equation}
\label{eq:weighted_objective}
\mathcal{L}(\theta) = \mathbb{E}_{(x, y, w) \sim \mathcal{D}} \left[ -w \cdot \log \pi_\theta(y \mid x) \right].
\end{equation}
This objective is not heuristically chosen; rather, it serves as a Monte Carlo approximation of the cross-entropy between the optimal distribution $p^\star$ and the policy $\pi_\theta$. As derived in Eq.~\eqref{thm:weighted_sft}, minimizing this weighted loss is theoretically equivalent to minimizing the forward KL divergence $\text{\normalfont KL}(p^\star \| \pi_\theta)$, thereby ensuring the policy converges towards the optimal solution implied by the KL-regularized RL target.

\textbf{Token-Level Realization.} 
Since the trajectory return is a holistic outcome of the entire generation sequence, we must propagate the trajectory-level evaluation to individual generation steps. In the context of autoregressive models, we implement this by applying the scalar weight $w$ uniformly to the loss of every token in the terminal response $y$:
\begin{equation}
\label{eq:token_loss}
-\log \pi_\theta(y \mid x) = \sum_{j=1}^{M} -\log \pi_\theta(y_j \mid x, y_{<j}),
\end{equation}
where $M$ is the sequence length. This formulation renders our optimization mechanically equivalent to standard weighted Supervised Fine-Tuning (SFT). This mechanism explicitly amplifies the gradients for desirable reasoning paths while suppressing suboptimal ones, effectively incentivizing the model to assign higher probability mass to tokens that belong to efficient and correct solutions.

A critical advantage of this design is the complete decoupling of rollout generation from parameter updates. Unlike online RL algorithms (e.g., PPO) that require computationally expensive trajectory sampling at every training step, \name{} treats the weighted dataset $\mathcal{D}$ as a fixed offline corpus. This allows the optimization phase to enjoy the high throughput and stability of standard Supervised Fine-Tuning (SFT), making it significantly more scalable for large language models.

\subsection{Protocol-Specific Motivation for Terminal-Step Retention}
\label{subsec:terminal_approx}

Theorem~\ref{thm:weighted_sft} gives a full-trajectory weighted objective.
In the practical implementation of \name{}, however, we apply the trajectory
weight only to the terminal response \((x_L,y_L)\). This terminal-only loss is
not an exact implementation of the full-trajectory objective; it is a
protocol-specific approximation for the stop-on-success correction setting.

The motivation is credit assignment. Since the episode terminates immediately
after the first correct response, every non-terminal response has been rejected
by the verifier under the fixed feedback protocol. Applying the same large
trajectory weight to all turns may therefore assign positive imitation signal
to rejected attempts. Terminal-step retention avoids this issue by supervising
only the final response, while still keeping the full interaction history in
the conditioning state \(x_L\). We formalize this as a bias-variance motivation
below.

\begin{figure}[t]
\centering
\includegraphics[width=1.0\columnwidth]{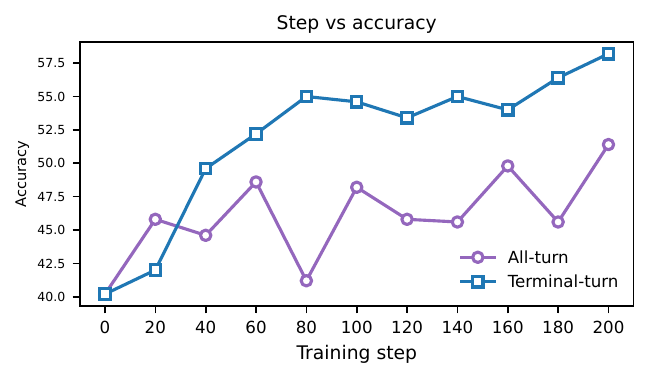}
\caption{Empirical support for terminal-step retention. Both variants use the same offline trajectories and trajectory weights; the all-turn variant supervises every response, while terminal-only supervision uses only the final response conditioned on the full interaction history.}
\label{fig:all_vs_terminal}
\vskip -0.15in
\end{figure}

\paragraph{Gradient Decomposition.}
Let \(\ell_t(\theta) \triangleq -\log \pi_\theta(y_t \mid x_t)\). The
full-trajectory and terminal-only gradient estimators are
\begin{equation}
g_{\mathrm{all}}
\triangleq
w(\tau) \sum_{t=1}^{L} \nabla_\theta \ell_t,
\qquad
g_{\mathrm{term}}
\triangleq
w(\tau) \nabla_\theta \ell_L .
\end{equation}
Their difference is
\begin{equation}
\Delta g
\triangleq
g_{\mathrm{all}} - g_{\mathrm{term}}
=
w(\tau) \sum_{t=1}^{L-1} \nabla_\theta \ell_t,
\end{equation}
which corresponds to the omitted gradients from intermediate verifier-rejected
responses.

\begin{table*}[t]
\caption{Cross-benchmark generalization after training on MetaMathQA (MATH subset).
We report \textbf{multi@5} accuracy (\%) with a maximum budget of 5 turns.}
\centering
\small
\setlength{\tabcolsep}{2.1pt}
\renewcommand{\arraystretch}{1.08}

\newlength{\DeltaW}
\setlength{\DeltaW}{2.3em} 

\newcommand{\deltabox}[1]{%
  \hspace{0.10em}%
  \makebox[0pt][l]{\smash{\raisebox{-0.65ex}{\tiny #1}}}
  \hspace{\DeltaW}
}
\newcommand{\up}[1]{\deltabox{\textcolor{red}{\ensuremath{\uparrow}#1}}}
\newcommand{\down}[1]{\deltabox{\textcolor{teal}{\ensuremath{\downarrow}#1}}}
\newcommand{\same}[1]{\deltabox{\textcolor{gray}{\ensuremath{\rightarrow}#1}}}

\begin{tabular}{@{}ll!{\vrule width 0.6pt}cccc!{\vrule width 0.6pt}cccc!{\vrule width 0.6pt}c@{}}
\toprule[1.2pt]
\multicolumn{2}{c!{\vrule width 0.6pt}}{} &
\multicolumn{4}{c!{\vrule width 0.6pt}}{\textbf{Math}} &
\multicolumn{4}{c!{\vrule width 0.6pt}}{\textbf{General}} &
\textbf{All} \\
\cmidrule(lr){3-6}\cmidrule(lr){7-10}\cmidrule(lr){11-11}
\textbf{Model} & \textbf{Method} &
\textbf{MATH} & \textbf{MATH500} & \textbf{ThmQA} & \textbf{Avg} &
\textbf{MMLU-R} & \textbf{MMLU-P} & \textbf{GPQA} & \textbf{Avg} &
\textbf{Avg} \\
\midrule[0.9pt]

\multirow{12}{*}{\shortstack[l]{Qwen2.5-3B\\-Instruct}}
& \multicolumn{10}{c}{\textit{Single-turn baselines}} \\
& Base        & 38.3 & 40.2 & 26.0 & 34.8 & 76.8 & 49.0 & 47.9 & 57.9 & 46.4 \\
& SFT         & 51.2\up{12.9} & 50.8\up{10.6} & 29.8\up{3.8} & 43.9\up{9.1} & 77.0\up{0.2} & 50.4\up{1.4} & 51.0\up{3.1} & 59.5\up{1.6} & 51.7\up{5.3} \\
& PPO         & 53.5\up{15.2} & 51.2\up{11.0} & 29.0\up{3.0} & 44.6\up{9.8} & 78.8\up{2.0} & 49.7\up{0.7} & 45.1\down{2.8} & 57.9\same{0.0} & 51.2\up{4.8} \\
\cmidrule(lr){2-11}
& \multicolumn{10}{c}{\textit{Multi-turn baselines (Offline)}} \\
& SFT-5turn   & 53.3\up{15.0} & 53.6\up{13.4} & 33.1\up{7.1} & 46.7\up{11.9} & 78.2\up{1.4} & 53.3\up{4.3} & 58.0\up{10.1} & 63.2\up{5.3} & 54.9\up{8.5} \\
& STaR-2turn  & 50.2\up{11.9} & 50.4\up{10.2} & 29.5\up{3.5} & 43.4\up{8.6} & 84.4\up{7.6} & 57.1\up{8.1} & 58.6\up{10.7} & 66.7\up{8.8} & 55.1\up{8.7} \\
\cmidrule(lr){2-11}
& \multicolumn{10}{c}{\textit{Multi-turn baselines (Online)}} \\
& SCoRe-2turn & 53.0\up{14.7} & 55.8\up{15.6} & 31.8\up{5.8} & 46.9\up{12.1} & 85.4\up{8.6} & 55.2\up{6.2} & 63.2\up{15.3} & 67.9\up{10.0} & 57.4\up{11.0} \\ 
& UFO-5turn   & 55.5\up{17.2} & 56.4\up{16.2} & 33.5\up{7.5} & 48.5\up{13.7} & \textbf{87.0}\up{10.2} & 57.1\up{8.1} & 71.2\up{23.3} & 71.8\up{13.9} & 60.2\up{13.8} \\
\cmidrule(lr){2-11}
& \multicolumn{10}{c}{\textit{Ours}} \\
\rowcolor{gray!15}
& DRIFT-5turn & \textbf{55.9}\up{17.6} & \textbf{58.2}\up{18.0} & \textbf{34.3}\up{8.3} & \textbf{49.5}\up{14.7} & 84.6\up{7.8} & \textbf{57.2}\up{8.2} & \textbf{72.7}\up{24.8} & \textbf{71.5}\up{13.6} & \textbf{60.5}\up{14.1} \\

\midrule[0.9pt]

\multirow{12}{*}{\shortstack[l]{Llama3.1-8B\\-Instruct}}
& \multicolumn{10}{c}{\textit{Single-turn baselines}} \\
& Base        & 36.8 & 38.2 & 22.0 & 32.3 & 84.5 & 62.0 & 44.9 & 63.8 & 48.1 \\ 
& SFT         & 42.4\up{5.6} & 42.0\up{3.8} & 24.3\up{2.3} & 36.2\up{3.9} & 85.3\up{0.8} & 58.9\down{3.1} & 46.4\up{1.5} & 63.5\down{0.3} & 49.9\up{1.8} \\ 
& PPO         & 40.9\up{4.1} & 41.6\up{3.4} & 30.1\up{8.1} & 37.5\up{5.2} & 87.0\up{2.5} & 60.0\down{2.0} & 50.2\up{5.3} & 65.7\up{1.9} & 51.6\up{3.5} \\ 
\cmidrule(lr){2-11}
& \multicolumn{10}{c}{\textit{Multi-turn baselines (Offline)}} \\
& SFT-5turn   & 41.3\up{4.5} & 44.0\up{5.8} & 23.3\up{1.3} & 36.2\up{3.9} & 85.4\up{0.9} & 57.4\down{4.6} & 48.5\up{3.6} & 63.8\same{0.0} & 50.0\up{1.9} \\ 
& STaR-2turn  & 41.7\up{4.9} & 43.2\up{5.0} & 26.2\up{4.2} & 37.0\up{4.7} & 86.1\up{1.6} & 62.3\up{0.3} & 55.7\up{10.8} & 68.0\up{4.2} & 52.5\up{4.4} \\
\cmidrule(lr){2-11}
& \multicolumn{10}{c}{\textit{Multi-turn baselines (Online)}} \\
& SCoRe-2turn & 42.8\up{6.0} & 45.0\up{6.8} & 26.1\up{4.1} & 38.0\up{5.7} & 86.6\up{2.1} & 62.1\up{0.1} & 54.8\up{9.9} & 67.8\up{4.0} & 52.9\up{4.8} \\
& UFO-5turn   & 43.1\up{6.3} & 46.4\up{8.2} & \textbf{30.3}\up{8.3} & 39.9\up{7.6} & \textbf{90.9}\up{6.4} & \textbf{64.3}\up{2.3} & \textbf{61.6}\up{16.7} & \textbf{72.3}\up{8.5} & \textbf{56.1}\up{8.0} \\
\cmidrule(lr){2-11}
& \multicolumn{10}{c}{\textit{Ours}} \\
\rowcolor{gray!15}
& DRIFT-5turn & \textbf{45.4}\up{8.6} & \textbf{48.2}\up{10.0} & 29.9\up{7.9} & \textbf{41.2}\up{8.9} & 87.6\up{3.1} & 62.6\up{0.6} & 60.1\up{15.2} & 70.1\up{6.3} & 55.6\up{7.5} \\ 

\bottomrule[1.2pt]
\end{tabular}
\vspace{-10pt}
\label{tab:main_cross_benchmark}
\end{table*}

\begin{proposition}[Bias-variance motivation for terminal-step retention]
\label{prop:bias_variance}
Assume that \(0 \leq w(\tau) \leq W_{\max}\) and
\(\|\nabla_\theta \ell_t\| \leq G_{\max}\). Then the difference between the
full-trajectory and terminal-only expected gradients is bounded by
\begin{equation}
\left\|
\mathbb{E}[g_{\mathrm{all}}]
-
\mathbb{E}[g_{\mathrm{term}}]
\right\|
\leq
W_{\max}G_{\max}\mathbb{E}[L-1].
\label{eq:bias_bound}
\end{equation}
Moreover, let
\(\mathrm{Var}(z)=\mathbb{E}\|z-\mathbb{E}z\|_2^2\) and
\(\mathrm{Cov}(a,b)=\mathbb{E}\langle a-\mathbb{E}a,b-\mathbb{E}b\rangle\)
for vector-valued gradients. If
\begin{equation}
\mathrm{Var}(\Delta g)
+
2\mathrm{Cov}(g_{\mathrm{term}},\Delta g)
>
0,
\end{equation}
then
\begin{equation}
\mathrm{Var}(g_{\mathrm{term}})
<
\mathrm{Var}(g_{\mathrm{all}}).
\label{eq:variance_reduction}
\end{equation}
\end{proposition}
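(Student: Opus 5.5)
The bias bound follows from linearity of expectation and the triangle inequality. The variance claim is an expansion of the variance of a sum.

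\textbf{Bias bound.} By linearity,
\[
\mathbb{E}[g_{\mathrm{all}}]-\mathbb{E}[g_{\mathrm{term}}]=\mathbb{E}[\Delta g]=\mathbb{E}\Bigl[w(\tau)\sum_{t=1}^{L-1}\nabla_\theta \ell_t\Bigr].
\]
The steps are as follows.
\begin{enumerate}
\item Move the norm inside the expectation using Jensen's inequality, $\|\mathbb{E}z\|\le\mathbb{E}\|z\|$.
\item Apply the triangle inequality to the random-length sum, giving $\|\Delta g\|\le w(\tau)\sum_{t=1}^{L-1}\|\nabla_\theta\ell_t\|$.
\item Use $0\le w(\tau)\le W_{\max}$ and $\|\nabla_\theta \ell_t\|\le G_{\max}$ pointwise in $\tau$. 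This gives $\|\Delta g\|\le W_{\max}G_{\max}(L-1)$.
\item Take expectations to obtain Eq.~\eqref{eq:bias_bound}.
\end{enumerate}
Because $L$ is random, the bounds are applied conditionally on each realized trajectory before averaging. Since $L\le T$, every quantity involved is integrable, so there are no measurability or integrability issues.

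\textbf{Variance comparison.} Write $g_{\mathrm{all}}=g_{\mathrm{term}}+\Delta g$ and center each term, setting $a=g_{\mathrm{term}}-\mathbb{E}g_{\mathrm{term}}$ and $b=\Delta g-\mathbb{E}\Delta g$. Expanding the squared Euclidean norm gives $\|a+b\|^2=\|a\|^2+\|b\|^2+2\langle a,b\rangle$. Taking expectations yields
\[
\mathrm{Var}(g_{\mathrm{all}})=\mathrm{Var}(g_{\mathrm{term}})+\mathrm{Var}(\Delta g)+2\,\mathrm{Cov}(g_{\mathrm{term}},\Delta g),
\]
with the vector-valued definitions given in the statement. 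Finite second moments are guaranteed by the same bounds as above: $\|g_{\mathrm{term}}\|\le W_{\max}G_{\max}$ and $\|\Delta g\|\le W_{\max}G_{\max}(T-1)$. Under the hypothesis that $\mathrm{Var}(\Delta g)+2\mathrm{Cov}(g_{\mathrm{term}},\Delta g)>0$, the strict inequality \eqref{eq:variance_reduction} follows immediately.

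\textbf{Main obstacle.} No step is genuinely hard. The only points needing care are the following.
\begin{itemize}
\item The covariance must be defined through the inner product, as the statement does, so that the polarization identity holds exactly.
\item Integrability must be checked, which follows from $L\le T$ together with the stated bounds.
\end{itemize}
I would also remark that the positivity condition is an assumption, not something derived. It is automatic when $\mathrm{Cov}\ge 0$ and $\Delta g$ is non-degenerate, but can fail otherwise. For that reason the proposition is a motivation rather than an unconditional guarantee.
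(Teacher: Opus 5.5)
Your proposal is correct and follows essentially the same route as the paper's proof. The bias bound uses Jensen's inequality, the triangle inequality and the pointwise bounds on $w$ and $\nabla_\theta \ell_t$, and the variance claim uses the decomposition $\mathrm{Var}(g_{\mathrm{all}})=\mathrm{Var}(g_{\mathrm{term}})+\mathrm{Var}(\Delta g)+2\,\mathrm{Cov}(g_{\mathrm{term}},\Delta g)$. Your integrability remark via $L\le T$ is a small addition; the paper simply assumes finite second moments.
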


\textbf{Remark.}
Proposition~\ref{prop:bias_variance} should be interpreted as a bias-variance
motivation rather than an optimizer-equivalence result. Terminal-only retention
introduces bias relative to the full-trajectory objective by omitting
intermediate turns as imitation targets. Under our stop-on-success protocol,
however, those turns are verifier-rejected attempts. When their gradients are
noisy or misaligned with the terminal correction signal, terminal-only retention
can reduce variance and improve credit assignment. This approximation is therefore justified by the protocol structure.

\textbf{Empirical results.}
On MATH500 with Qwen2.5-3B-Instruct, we test this interpretation by comparing \name{} with an all-turn
variant that uses the same offline trajectories and trajectory weights, but
applies each weight to every response in the trajectory. As shown in
Figure~\ref{fig:all_vs_terminal}, terminal-only supervision reaches higher
accuracy and yields a smoother optimization curve under the same training
schedule. This supports the prediction of Proposition~\ref{prop:bias_variance}:
in stop-on-success trajectories, retroactively assigning large weights to
verifier-rejected intermediate responses can amplify misaligned gradients,
whereas concentrating supervision on the terminal response provides a better
bias-variance tradeoff in practice.

\begin{figure*}[t]
\vskip 0.2in
\begin{center}
\centerline{\includegraphics[width=\textwidth]{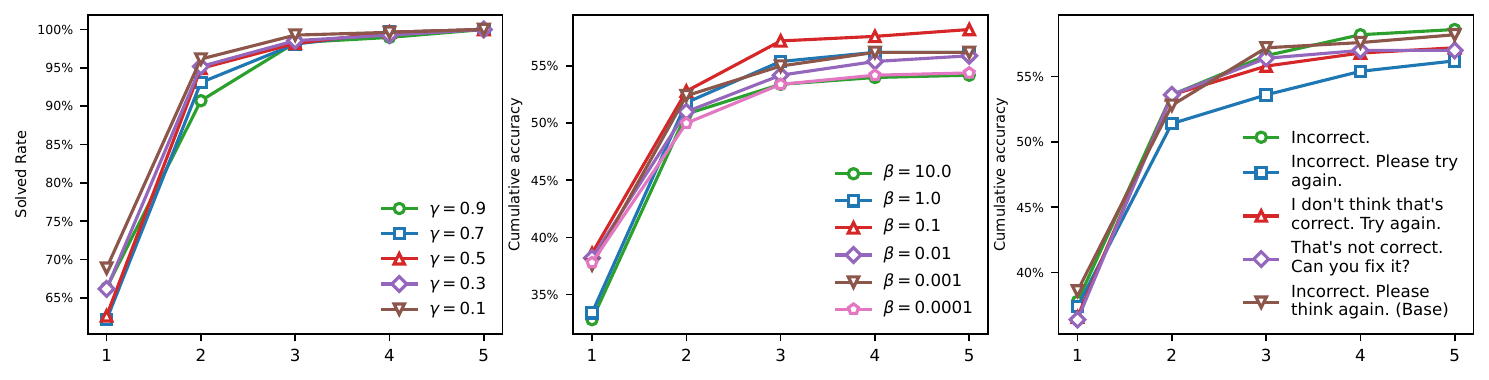}}
\caption{For different $\gamma$ values, report the proportion of problems cumulatively solved at each turn relative to the total number of problems solved by the end. For different $\beta$ and feedback values, report the cumulative accuracy at each turn.}
\label{fig:abalation_gamma}
\end{center}
\vskip -0.2in
\vspace{-14pt}
\end{figure*}

\section{Experiments}

\subsection{Experimental Setup}

\textbf{Training setup.} 
We employ Qwen2.5-3B-Instruct \citep{qwen2025qwen25technicalreport} and Llama3.1-8B-Instruct \citep{grattafiori2024llama} as the base model and train it on the MATH subset of the MetaMathQA \citep{yu2024metamathbootstrapmathematicalquestions} dataset. 
We introduce more training details in Section~\ref{sec:training_setup}.

\textbf{Benchmark.}
All evaluations are performed using greedy decoding.
Since all methods are trained on the MATH subset of MetaMathQA, we organize the evaluation benchmarks into two groups: math reasoning benchmarks from the same broad task family, and out-of-domain general reasoning benchmarks.
This split tests whether learned multi-turn correction transfers beyond the training domain rather than only improving MATH-style problem solving.
We detail the benchmarks and experimental settings in Section~\ref{sec:benchmark}.

\textbf{Metrics.} 
We report the cumulative accuracy with a maximum budget of 5 turns, denoted as $\text{multi}@5$. 
Generation will terminate immediately if the answer is correct, otherwise it continues up to $k$ turns. 
This metric is formally defined as:
\begin{equation}
\text{multi}@k \;=\; \frac{1}{N}\sum_{i=1}^N \max_{t\in\{1,\dots,k\}} \mathcal{V}(y_{i,t}).
\end{equation}
where $\mathcal{V}(\cdot)$ denotes the verifier function and $N$ is the number of test samples.

\subsection{Main Results}
Table~\ref{tab:main_cross_benchmark} summarizes the main results.
Across most benchmarks, the RL-based method UFO outperforms SFT-based baselines, and \name{} further improves upon UFO under matched settings.
All methods are trained on math data.
Single-turn training attains multi@k performance close to multi-turn training on MATH-style benchmarks mainly by improving the first-turn accuracy; however, it does not learn to condition on negative feedback and thus yields little gain on non-math benchmarks.
In contrast, multi-turn training explicitly learns correction under negative feedback, leading to substantial improvements and enabling effective multi-turn behavior beyond math tasks.
Among multi-turn methods, RL-based approaches remain stronger than SFT-based ones, with the gap most pronounced on non-math benchmarks.
By realizing the RL objective with an SFT-style optimization, \name{} matches or surpasses RL baselines on most benchmarks.
In small discrete action spaces (e.g., multiple-choice), UFO significantly suppresses repetition, enabling models to guess via elimination when intrinsic capability is lacking, notably on Llama-3.1-8B-Instruct. We illustrate examples of blind guessing and briefly discuss them in Section~\ref{sec:case_3}.

\subsection{Training efficiency comparison}
We compare training efficiency for Qwen2.5-3B-Instruct and Llama3.1-8B-Instruct on two hardware configurations: 4$\times$ NVIDIA A800 (80GB) GPUs and 4$\times$ NVIDIA H20 (96GB) GPUs, and report GPU time.
GPU time measures the end-to-end wall-clock time, including rollout latency for methods that perform rollouts (both UFO and \name{}). We run 200 training steps with a global batch size of 128 distributed over 4 GPUs.
As the number of turns increases, SFT-5Turn and \name{} incur only a small increase in GPU time, while UFO-5Turn becomes substantially slower due to the growing cost of multi-turn rollouts.
Overall, \name{} achieves higher training-time efficiency than UFO, and its scaling behavior remains close to SFT-style training as the turn count grows.

\subsection{Turn-by-turn performance comparison}

\begin{figure}[h]
\vspace{-18pt}
\vskip 0.2in
\begin{center}
\centerline{\includegraphics[width=\columnwidth]{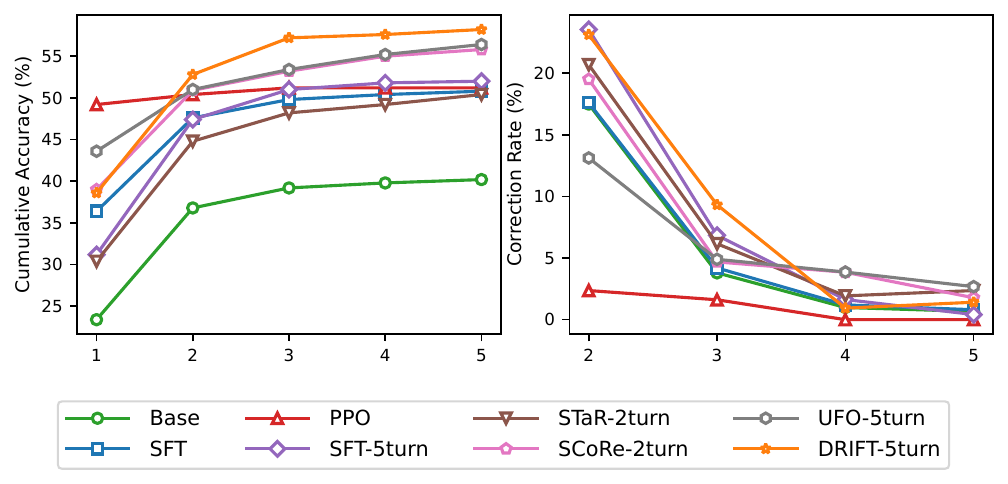}}
\caption{Cumulative success rate and correction rate per turn on MATH500 for Qwen2.5-3B-Instruct trained with different methods.}
\label{fig:method_compare}
\end{center}
\vskip -0.2in
\vspace{-6pt}
\end{figure}

\begin{figure*}[h]
\vskip 0.2in
\begin{center}
\centerline{\includegraphics[width=\textwidth]{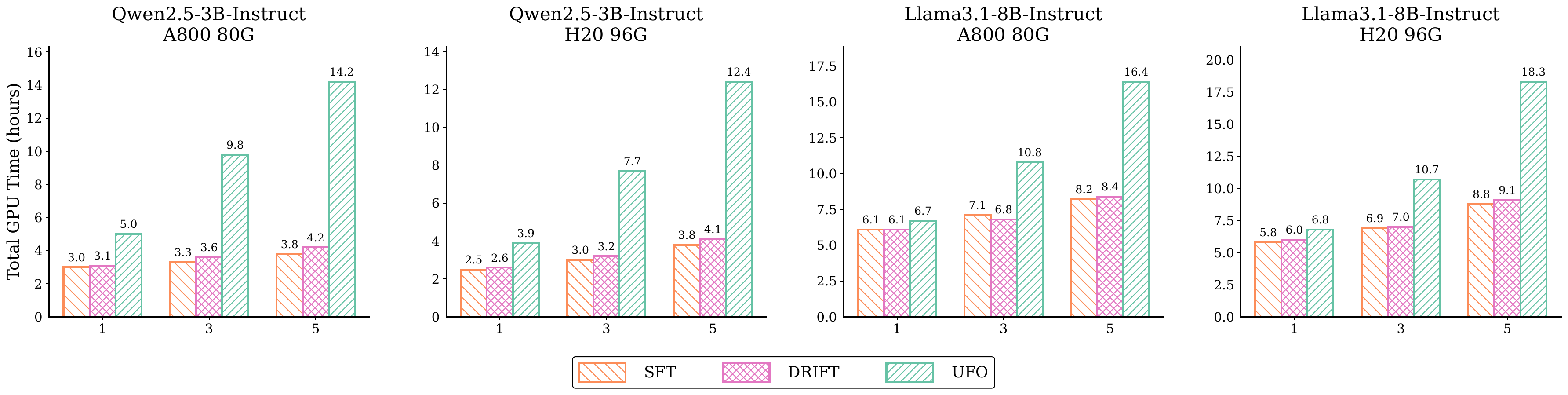}}
\caption{Training efficiency comparison in GPU time across two base models and two hardware configurations. Compare with multi-turn SFT (SFT-5Turn) and multi-turn RL (UFO-5Turn).}
\label{fig:gpu_time}
\end{center}
\vskip -0.2in
\vspace{-15pt}
\end{figure*}

In Fig.~\ref{fig:method_compare}, we report the cumulative accuracy and correction rate (\# correct this turn / \# wrong in the previous turn) at each turn after training for different methods. Among all methods, \name{} achieves a significantly higher correction rate in early turns. And the results show that single-turn training primarily improves the turn~1 accuracy, with only marginal gains in later turns. In particular, the model trained with single-turn PPO is almost unable to continue correcting its answers under multi-turn interactions. Among multi-turn training methods, RL-based approaches exhibit stronger turn-by-turn performance than SFT-based ones. However, as shown in Fig.~\ref{fig:gpu_time}, RL-based methods are substantially less training-efficient. In contrast, \name{} achieves multi-turn improvements that match or surpass RL-based methods, while retaining a training efficiency comparable to SFT-based methods.

\subsection{Impact of hyperparameters}

We analyze the effect of different hyperparameters $\gamma$ and $\beta$, as shown in Fig.~\ref{fig:abalation_gamma}. To compare convergence behaviors across different $\gamma$ values, we define the \textit{solved rate} as the ratio of the cumulative success rate at a given turn to the final success rate achieved by the end of the episode. This metric indicates the fraction of ultimately solvable problems that are solved by the current turn. The results demonstrate that a smaller $\gamma$ encourages the model to solve problems in fewer turns, aligning with the discount factor's role in incentivizing early success.


For different $\beta$ values, we report the cumulative accuracy at each turn in Fig.~\ref{fig:abalation_gamma} (middle). The results show that a moderate $\beta$ (we use $\beta=0.1$) performs best: larger $\beta$ leads to weaker exponential tilting and smaller multi-turn gains, while smaller $\beta$ makes the weights overly concentrated and degrades stability. This matches the role of $\beta$ in Theorem~\ref{thm:optimal_traj} (controlling the sharpness of the tilted distribution) and the stability trade-off discussed in Proposition~\ref{prop:concentration}.

Although we train with negative feedback string $f=$ ``Incorrect. Please think again.'', the learned correction behavior is not sensitive to the exact wording. As shown in Fig.~\ref{fig:abalation_gamma} (right), different feedback strings $f$ have only a minor impact on performance and simpler feedback such as ``Incorrect'' can better elicit the model's multi-turn capability, suggesting stable performance across different feedback variants.

\section{Conclusion \& Limitations}

\textbf{Conclusion.}
In this work, we introduced \name{}, a practical framework for multi-turn optimization under lightweight negative feedback.
\name{} decouples trajectory generation from optimization by sampling multi-turn correction trajectories once from a frozen reference policy, assigning each trajectory an exponential weight derived from a KL-regularized multi-turn objective, and training the target model using an importance-weighted supervised fine-tuning objective.
This formulation provides a simple and stable alternative to online multi-turn reinforcement learning, avoiding repeated rollouts during training while still optimizing for both accuracy and efficiency across turns. Empirically, under matched settings, \name{} achieves performance comparable to or better than strong multi-turn RL baselines across math and general-domain benchmarks, while approaching the efficiency profile of standard supervised fine-tuning as the interaction horizon increases.

\textbf{Limitations.}
\name{} is intended for short-horizon, verifier-guided correction with lightweight deterministic feedback. This boundary matches our problem formulation and evaluation protocol: (1) The verifier provides an unambiguous correctness signal, and each episode contains only a small number of correction attempts. Settings with stochastic or preference-based human feedback, open-ended dialogue objectives, or substantially longer-horizon interactive planning require additional modeling of feedback uncertainty, credit assignment, and exploration, and are left for future work. (2) A second limitation comes from offline rollout coverage. Because \name{} samples correction trajectories from a fixed reference policy and does not alternate rollout collection with policy optimization, it can only upweight useful behaviors that appear in the collected trajectories. It may therefore miss strategies that online RL could discover through repeated exploration. As a simple initial attempt to reduce this limitation, Appendix~\ref{sec:iterative_rollout_refresh} studies a two-stage rollout-refresh variant that regenerates rollouts after an intermediate \name{} checkpoint and obtains a modest gain over single-stage \name{}. More systematic rollout-refresh schedules and hybrid offline-online training remain open directions.

\section*{Impact Statement}

This paper presents work whose goal is to advance the field of Machine
Learning. There are many potential societal consequences of our work, none
which we feel must be specifically highlighted here.


\section*{Acknowledgements}
This work was supported in part by the National Natural Science Foundation of China (Grant Nos.~62506319), the Guangdong Basic and Applied Basic Research Foundation (Grant No.~2026A1515030032), the Shenzhen Science and Technology Program (Grant No. JCYJ20250604141031003), and the Pearl River Talent Program of Guangdong Province (Grant No.~2024QN11X069).

\bibliography{example_paper}
\bibliographystyle{icml2026}

\newpage
\appendix
\onecolumn

\section{Related Work}

\textbf{Multi-turn self-correction under lightweight negative feedback.}
Large language models (LLMs) exhibit a non-trivial but fragile ability to revise their answers when presented with minimal negative feedback such as ``Incorrect, please try again'' \citep{liu2025simple,kumar2024training}. However, this behavior is unreliable. Extended multi-turn contexts can cause models to drift or ``get lost'' as dialogue history grows \citep{laban2025llms,li2025beyond}, and intrinsic self-correction may degrade answer quality, produce superficial edits, or amplify errors \citep{zhang2025understanding,li2024confidence}. Beyond prompting-based elicitation, a line of work studies iterative improvement via self-critique and refinement, for example self-refinement and reflection-style agents \citep{madaan2023self,shinn2023reflexion}. These approaches can help, but they typically do not optimize a principled trajectory-level objective and they remain sensitive to prompting, verification, and context accumulation. Empirically, multi-step reasoning can sometimes be elicited by generic ``try again'' messages \citep{liu2025simple}, but the gains vary by task and model family, motivating learning objectives that directly target multi-turn success rather than only first-turn correctness.
These observations align with our setting, a deterministic multi-turn correction protocol with lightweight negative feedback, and motivate optimizing the trajectory-level outcome that DRIFT is designed to improve.

\textbf{Learning from correction trajectories and multi turn fine tuning.}
A common approach to improve revision behavior is to collect correction traces and apply supervised fine-tuning (SFT) on multi-turn demonstrations \citep{zelikman2022star,welleck2022generating,singh2023beyond}. Self-training and bootstrapping methods such as STaR generate candidate solutions or rationales and then distill them into the model via SFT, improving problem solving with limited additional supervision \citep{zelikman2022star,singh2023beyond}. For interactive correction, offline multi-turn SFT can teach models to condition on feedback, but it can also suffer from distribution shift. The model trains on traces whose early turns are produced by a different policy than the one deployed at test time, so correction quality may drop when the first-turn response is generated by the updated model \citep{kumar2024training}. Moreover, SFT can exhibit behavioral collapse in which the model over-optimizes first-attempt accuracy while performing shallow or non-committal revisions in later turns \citep{kumar2024training}. Recent analyses further suggest that intermediate turns may contain noisy or spurious reasoning paths, complicating stable learning from full trajectories \citep{zhang2025understanding}. Complementary to multi-turn trace SFT, recent offline RL formulations for conversational policies recast short-horizon interaction learning as return-weighted fine-tuning, enabling direct use of scalar rewards on logged dialogue trajectories \citep{mukherjeeoffline}.
In contrast to unweighted multi-turn SFT on fixed traces, DRIFT retains SFT-like optimization but reweights offline trajectories to approximate the intended multi-turn objective, thereby addressing these failure modes while staying compatible with efficient fine-tuning pipelines.

\textbf{KL regularized policy optimization and return weighted fine tuning.}
To optimize multi-turn outcomes, reinforcement learning (RL) and RLHF update the policy from rollouts and reward signals, often with KL regularization to stay close to a reference model \citep{schulman2017proximal,zheng2025gspo,ouyang2022training,gao2024regressing}. In LLM alignment, PPO-style RLHF and sequence-level variants such as GRPO \citep{shao2024deepseekmath,zheng2025group} can be viewed as KL-regularized distribution matching, while single-turn alternatives such as DPO replace online RL with closed-form preference objectives \citep{rafailov2023direct}. More broadly, RL-as-inference and reward-weighted or advantage-weighted regression show that KL-regularized objectives induce exponential reweighting of actions or trajectories \citep{peters2007reinforcement,levine2018reinforcement,peng2019advantage,nair2020awac}, enabling offline optimization via importance weighting. Related connections also appear in structured prediction, where reward augmented maximum likelihood (RAML) integrates task rewards into maximum likelihood training through an exponentiated reward distribution \citep{norouzi2016reward}. Building on these views, recent alignment methods derive reward-driven reweighted SFT objectives through variational or bound-tightening analyses, including VAR \citep{du2025simplify} and importance-weighted SFT \citep{qin2025supervised}, and successive policy reweighting schemes that target RL objectives with SFT-like compute \citep{zhang2024llm}. Complementary analyses study when RLVR with answer-only rewards can still promote correct reasoning and what normalization or supervision improves stability \citep{wen2025reinforcement}. Although online multi-turn RL can be effective \citep{gao2024regressing,kumar2024training}, its rollout cost grows quickly with horizon because each update requires full multi-turn trajectories.
DRIFT applies these KL-regularized, weighted-likelihood principles at the trajectory level using offline rollouts under a reference policy, prompt-level weight normalization, and terminal-only supervision, decoupling rollout from optimization and achieving an RL-motivated target with SFT-like efficiency.

\section{Additional theoretical analyses}
\subsection{Estimation Stability and Sample Complexity}
\label{sec:theory_stability}

The stability of the gradient estimator relies on the accuracy of the normalized weights $\widehat{w}(\tau) = \exp(R(\tau)/\beta)/\widehat{Z}(x)$.
Since the Monte Carlo estimate $\widehat{Z}(x)$ appears in the denominator, small estimation errors can be amplified, particularly when the partition function is small.
Here, we analyze the sample complexity required to bound this error.

\begin{proposition}[Concentration of the Partition Estimate]
\label{prop:concentration}
Assume bounded returns $R(\tau) \in [R_{\min}, R_{\max}]$.
Let $m_\beta \triangleq \exp(R_{\min}/\beta)$ and $M_\beta \triangleq \exp(R_{\max}/\beta)$ be the bounds of the exponentiated return, with range $\Delta_\beta \triangleq M_\beta - m_\beta$.
For any $\epsilon > 0$, Hoeffding's inequality guarantees:
\begin{equation}
\label{eq:hoeffding}
P\left( \left| \widehat{Z}(x) - Z(x) \right| \ge \epsilon \right) 
\le 
2 \exp \left( -\frac{2K\epsilon^2}{\Delta_\beta^2} \right).
\end{equation}
To ensure $|\widehat{Z}(x) - Z(x)| \le \epsilon$ with probability at least $1-\delta$, the sample size $K$ must satisfy:
\begin{equation}
\label{eq:sample_complexity}
K \ge \frac{\Delta_\beta^2}{2\epsilon^2} \log \left(\frac{2}{\delta}\right).
\end{equation}
\end{proposition}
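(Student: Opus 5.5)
The plan is to apply Hoeffding's inequality to the i.i.d. summands that define $\widehat{Z}(x)$, and then invert the resulting tail bound to get the sample size. Fix the prompt $x$ and write $X_j \triangleq \exp(R(\tau^{(j)})/\beta)$ for $j=1,\dots,K$. In Algorithm~\ref{alg:offline_generation} the $K$ trajectories are sampled independently from $p_{\text{ref}}(\cdot\mid x)$, so the $X_j$ are i.i.d. Since $\beta>0$, the map $r\mapsto \exp(r/\beta)$ is increasing. Together with $R(\tau)\in[R_{\min},R_{\max}]$, this gives $X_j\in[m_\beta,M_\beta]$ almost surely, an interval of length $\Delta_\beta$. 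We also have $\mathbb{E}[X_j]=\mathbb{E}_{\tau\sim p_{\text{ref}}}[\exp(R(\tau)/\beta)]=Z(x)$ by the definition in Theorem~\ref{thm:optimal_traj}. Finally, $\widehat{Z}(x)=\frac1K\sum_j X_j$ is exactly the empirical mean appearing in Eq.~\eqref{eq:weight_calculation}.

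Next, apply the two-sided Hoeffding inequality for a mean of $K$ independent variables, each bounded in an interval of width $\Delta_\beta$:
\[
P\Bigl(\bigl|\tfrac1K\textstyle\sum_j X_j-\mathbb{E}X_1\bigr|\ge\epsilon\Bigr)\le 2\exp\!\Bigl(-\tfrac{2K^2\epsilon^2}{\sum_{j}\Delta_\beta^2}\Bigr)=2\exp\!\Bigl(-\tfrac{2K\epsilon^2}{\Delta_\beta^2}\Bigr).
\]
This is Eq.~\eqref{eq:hoeffding}. If $\Delta_\beta=0$ then $X_j$ is constant, so $\widehat{Z}(x)=Z(x)$ deterministically and there is nothing to prove; we may therefore assume $\Delta_\beta>0$.

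For the sample complexity, require the right-hand side to be at most $\delta$: $2\exp(-2K\epsilon^2/\Delta_\beta^2)\le\delta$. Taking logarithms, which are monotone, turns this into $2K\epsilon^2/\Delta_\beta^2\ge\log(2/\delta)$, i.e. Eq.~\eqref{eq:sample_complexity}. Under that condition, $|\widehat{Z}(x)-Z(x)|<\epsilon$ holds with probability at least $1-\delta$.

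No step here is genuinely difficult. The only point needing care is the independence assumption: the trajectories for a given prompt must be drawn i.i.d. from the fixed reference policy, and the multi-turn protocol must not couple them. This holds because the transition dynamics are deterministic given each trajectory's own responses, and each rollout is generated separately.
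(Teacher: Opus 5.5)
Your proposal is correct and follows the paper's proof essentially step for step. Both define $X_j=\exp(R(\tau^{(j)})/\beta)\in[m_\beta,M_\beta]$ as i.i.d.\ draws with mean $Z(x)$, apply two-sided Hoeffding to obtain Eq.~\eqref{eq:hoeffding}, and invert the tail bound by taking logarithms to get Eq.~\eqref{eq:sample_complexity}; your remarks on the degenerate case $\Delta_\beta=0$ and on the independence of the rollouts are harmless additions that the paper leaves implicit.
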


\textbf{Stability of Normalized Weights.}
Bounding the error of $\widehat{Z}$ is a prerequisite for stable training.
Since $Z(x) \ge m_\beta > 0$, the function $f(z) = 1/z$ is Lipschitz continuous on $[m_\beta, \infty)$ with constant $1/m_\beta^2$.
Consequently, an estimation error $|\widehat{Z}-Z| \le \epsilon$ propagates to the importance weights as:
\begin{equation}
\left| \widehat{w}(\tau) - w(\tau) \right| 
\le 
M_\beta \left| \frac{1}{\widehat{Z}} - \frac{1}{Z} \right| 
\le 
\frac{M_\beta}{m_\beta^2} \epsilon.
\label{eq:estimation_error}
\end{equation}
This confirms that sufficiently large $K$ ensures the convergence of the empirical weights to their theoretical values.

\textbf{The Regularization-Complexity Trade-off.}
Eq.~\eqref{eq:sample_complexity} reveals that the sample complexity is governed by the range $\Delta_\beta$.
In the \textit{weak regularization} regime ($\beta \to 0$), $\Delta_\beta$ grows exponentially, indicating that in the worst case, the required $K$ to maintain a fixed precision $\epsilon$ scales with $\exp(2R_{\max}/\beta)$.
This theoretical bound justifies our use of a moderate $\beta$: it balances the sharpness of the distribution matching against the need for sample efficiency, preventing the gradient estimator from being dominated by high-variance sampling noise.

\section{Proofs.}

\subsection{Proof of Thm~\ref{thm:optimal_traj}}
\begin{proof}
Fix an initial prompt $x$ and abbreviate
$p(\tau)\triangleq p(\tau\mid x)$ and $p_{\text{\normalfont ref}}(\tau)\triangleq p_{\text{\normalfont ref}}(\tau\mid x)$.
Let $\mathcal{T}_x$ denote the (countable) set of valid trajectories under the deterministic protocol.
Consider the variational problem
\begin{align}
\max_{p(\cdot)}\quad
&\sum_{\tau\in\mathcal{T}_x} p(\tau)\,R(\tau)
\;-\;\beta \sum_{\tau\in\mathcal{T}_x} p(\tau)\log\frac{p(\tau)}{p_{\text{\normalfont ref}}(\tau)}
\label{eq:thm1_obj}\\
\text{s.t.}\quad
&\sum_{\tau\in\mathcal{T}_x} p(\tau)=1,\qquad p(\tau)\ge 0\ \ \forall \tau .
\nonumber
\end{align}
If $p_{\text{\normalfont ref}}(\tau)=0$ for some $\tau$, any feasible $p$ with $p(\tau)>0$ yields
$\text{\normalfont KL}(p\|p_{\text{\normalfont ref}})=+\infty$; hence at the optimum we must have $p(\tau)=0$ whenever
$p_{\text{\normalfont ref}}(\tau)=0$. In the remainder, we restrict to $\tau$ with $p_{\text{\normalfont ref}}(\tau)>0$.

Form the Lagrangian (we temporarily ignore inequality constraints and verify feasibility afterward):
\begin{align}
\mathcal{L}(p,\lambda)
=
\sum_{\tau} p(\tau)R(\tau)
-\beta\sum_{\tau} p(\tau)\log\frac{p(\tau)}{p_{\text{\normalfont ref}}(\tau)}
+\lambda\Big(\sum_{\tau}p(\tau)-1\Big).
\label{eq:thm1_lagr}
\end{align}
Taking the derivative w.r.t.\ $p(\tau)$ and setting it to zero gives, for each $\tau$ with
$p_{\text{\normalfont ref}}(\tau)>0$,
\begin{align}
0
=
\frac{\partial \mathcal{L}}{\partial p(\tau)}
=
R(\tau)-\beta\Big(\log\frac{p(\tau)}{p_{\text{\normalfont ref}}(\tau)}+1\Big)+\lambda .
\label{eq:thm1_stationary}
\end{align}
Rearranging \eqref{eq:thm1_stationary} yields
\begin{align}
\log\frac{p(\tau)}{p_{\text{\normalfont ref}}(\tau)}
=
\frac{R(\tau)}{\beta}+\frac{\lambda-\beta}{\beta},
\qquad\Longrightarrow\qquad
p(\tau)=C\;p_{\text{\normalfont ref}}(\tau)\exp\!\Big(\frac{R(\tau)}{\beta}\Big),
\label{eq:thm1_form}
\end{align}
where $C\triangleq \exp\big((\lambda-\beta)/\beta\big)$ is a constant independent of $\tau$.
Imposing the normalization constraint $\sum_{\tau}p(\tau)=1$ gives
\begin{align}
1
=
\sum_{\tau} p(\tau)
=
C\sum_{\tau}p_{\text{\normalfont ref}}(\tau)\exp\!\Big(\frac{R(\tau)}{\beta}\Big)
=
C\,Z(x),
\label{eq:thm1_norm}
\end{align}
so $C=1/Z(x)$, with
\begin{align}
Z(x)
\triangleq
\sum_{\tau\in\mathcal{T}_x} p_{\text{\normalfont ref}}(\tau\mid x)\exp\!\Big(\frac{R(\tau)}{\beta}\Big)
=
\mathbb{E}_{\tau\sim p_{\text{\normalfont ref}}(\cdot\mid x)}
\Big[\exp\!\big(\tfrac{R(\tau)}{\beta}\big)\Big].
\label{eq:thm1_partition}
\end{align}
Therefore, the (candidate) maximizer is
\begin{align}
p^\star(\tau\mid x)
=
\frac{1}{Z(x)}\,p_{\text{\normalfont ref}}(\tau\mid x)\exp\!\Big(\frac{R(\tau)}{\beta}\Big).
\label{eq:thm1_pstar}
\end{align}
Since $R(\tau)$ is bounded by assumption, $\exp(R(\tau)/\beta)$ is bounded, hence
$0<Z(x)<\infty$ and $p^\star$ is well-defined.

It remains to show optimality and uniqueness. The feasible set
$\{p:\sum_{\tau}p(\tau)=1,\ p(\tau)\ge 0\}$ is convex. The mapping
$p\mapsto \sum_{\tau}p(\tau)R(\tau)$ is linear, and $p\mapsto -\beta\,\text{\normalfont KL}(p\|p_{\text{\normalfont ref}})$
is strictly concave on $\{p:\,p\ll p_{\text{\normalfont ref}}\}$.
Hence the objective in \eqref{eq:thm1_obj} is strictly concave over the feasible region (restricted to the
support of $p_{\text{\normalfont ref}}$), so any stationary point is the unique global maximizer.
Consequently, $p^\star(\cdot\mid x)$ in \eqref{eq:thm1_pstar} is the unique solution to
\eqref{eq:thm1_obj}.
\end{proof}

\subsection{Proof of Thm~\ref{thm:rl_equivalence}}
\begin{proof}
Fix a prompt $x$. For brevity, write
\[
P_\theta \triangleq p_\theta(\cdot\mid x),\quad
P_{\text{\normalfont ref}} \triangleq p_{\text{\normalfont ref}}(\cdot\mid x),\quad
P^\star \triangleq p^\star(\cdot\mid x).
\]
If $P_\theta \not\ll P_{\text{\normalfont ref}}$, then $\text{\normalfont KL}(P_\theta\|P_{\text{\normalfont ref}})=+\infty$ and the
claimed identity holds trivially (both sides equal $-\infty$ under the convention
$J(\theta)=-\infty$). Hence assume $P_\theta \ll P_{\text{\normalfont ref}}$.

Recall that $P^\star$ is defined by
\begin{align}
p^\star(\tau\mid x)
=\frac{1}{Z(x)}\,p_{\text{\normalfont ref}}(\tau\mid x)\exp\!\Big(\frac{R(\tau)}{\beta}\Big),
\qquad
Z(x)\triangleq \sum_{\tau} p_{\text{\normalfont ref}}(\tau\mid x)\exp\!\Big(\frac{R(\tau)}{\beta}\Big),
\label{eq:app_thm2_pstar_math}
\end{align}
so $P^\star \ll P_{\text{\normalfont ref}}$ and
\[
\log\frac{p^\star(\tau\mid x)}{p_{\text{\normalfont ref}}(\tau\mid x)}
=\frac{R(\tau)}{\beta}-\log Z(x).
\]
Therefore, for any $\tau$ with $p_\theta(\tau\mid x)>0$,
\begin{align}
\log\frac{p_\theta(\tau\mid x)}{p^\star(\tau\mid x)}
&=\log\frac{p_\theta(\tau\mid x)}{p_{\text{\normalfont ref}}(\tau\mid x)}
-\log\frac{p^\star(\tau\mid x)}{p_{\text{\normalfont ref}}(\tau\mid x)}
\nonumber\\
&=\log\frac{p_\theta(\tau\mid x)}{p_{\text{\normalfont ref}}(\tau\mid x)}
-\frac{R(\tau)}{\beta}+\log Z(x).
\label{eq:app_thm2_logratio}
\end{align}
Taking expectation under $\tau\sim P_\theta$ yields
\begin{align}
\text{\normalfont KL}(P_\theta\|P^\star)
&\triangleq \mathbb{E}_{\tau\sim P_\theta}\Big[\log\frac{p_\theta(\tau\mid x)}{p^\star(\tau\mid x)}\Big]
\nonumber\\
&=\mathbb{E}_{\tau\sim P_\theta}\Big[\log\frac{p_\theta(\tau\mid x)}{p_{\text{\normalfont ref}}(\tau\mid x)}\Big]
-\frac{1}{\beta}\mathbb{E}_{\tau\sim P_\theta}\big[R(\tau)\big]
+\log Z(x)
\nonumber\\
&=\text{\normalfont KL}(P_\theta\|P_{\text{\normalfont ref}})
-\frac{1}{\beta}\mathbb{E}_{\tau\sim P_\theta}\big[R(\tau)\big]
+\log Z(x).
\label{eq:app_thm2_kl_expand_math}
\end{align}
Rearranging \eqref{eq:app_thm2_kl_expand_math} gives
\begin{align}
\mathbb{E}_{\tau\sim P_\theta}\big[R(\tau)\big]
-\beta\,\text{\normalfont KL}(P_\theta\|P_{\text{\normalfont ref}})
=
\beta\log Z(x)
-\beta\,\text{\normalfont KL}(P_\theta\|P^\star),
\label{eq:app_thm2_identity}
\end{align}
which is exactly the statement of Thm.~\ref{thm:rl_equivalence}.
\end{proof}

\subsection{Proof of Lemma~\ref{lem:kl_duality}}
\label{app:proof_kl_duality_and_ce}

\begin{proof}
Fix a prompt $x$ and let $\mathcal{T}_x$ denote the (countable) set of valid trajectories under $x$.
For notational simplicity, define
\begin{equation}
\label{eq:app:def_P_theta_P_star}
P_\theta(\tau)\triangleq p_\theta(\tau\mid x),
\qquad
P^\star(\tau)\triangleq p^\star(\tau\mid x),
\qquad \forall \tau\in\mathcal{T}_x.
\end{equation}
By the realizability assumption in Lemma~\ref{lem:kl_duality}, there exists $\theta^\star$ such that
\begin{equation}
\label{eq:app:realizability}
P_{\theta^\star}(\tau)=P^\star(\tau),
\qquad \forall \tau\in\mathcal{T}_x.
\end{equation}

We will show that both objectives $\text{\normalfont KL}(P_\theta\|P^\star)$ and $\text{\normalfont KL}(P^\star\|P_\theta)$ attain their global minimum at exactly the set of parameters satisfying $P_\theta=P^\star$, and moreover minimizing $\text{\normalfont KL}(P^\star\|P_\theta)$ is equivalent to minimizing the cross-entropy term $\mathbb{E}_{\tau\sim P^\star}\!\big[-\log p_\theta(\tau\mid x)\big]$.

\textbf{Step 1: Minimizers of the reverse KL.}
Recall the definition of KL divergence on $\mathcal{T}_x$:
\begin{equation}
\label{eq:app:kl_def}
\text{\normalfont KL}(P\|Q)
\triangleq
\sum_{\tau\in\mathcal{T}_x} P(\tau)\log\frac{P(\tau)}{Q(\tau)},
\end{equation}
with the standard convention that if $P(\tau)>0$ and $Q(\tau)=0$ for some $\tau$, then $\text{\normalfont KL}(P\|Q)=+\infty$.

Therefore, if there exists $\tau\in\mathcal{T}_x$ such that $P_\theta(\tau)>0$ but $P^\star(\tau)=0$, then
\begin{equation}
\label{eq:app:reverse_kl_infty}
\text{\normalfont KL}(P_\theta\|P^\star)=+\infty.
\end{equation}
On the other hand, by \eqref{eq:app:realizability},
\begin{equation}
\label{eq:app:reverse_kl_zero_at_star}
\text{\normalfont KL}(P_{\theta^\star}\|P^\star)
=
\text{\normalfont KL}(P^\star\|P^\star)
=0.
\end{equation}
Hence any $\theta$ satisfying \eqref{eq:app:reverse_kl_infty} cannot be a minimizer of
$\theta\mapsto \text{\normalfont KL}(P_\theta\|P^\star)$.

Next, for all $\theta$ such that $\text{\normalfont KL}(P_\theta\|P^\star)$ is finite, KL non-negativity implies
\begin{equation}
\label{eq:app:kl_nonneg_reverse}
\text{\normalfont KL}(P_\theta\|P^\star)\ge 0.
\end{equation}
Combining \eqref{eq:app:reverse_kl_zero_at_star} and \eqref{eq:app:kl_nonneg_reverse}, the global minimum value of
$\theta\mapsto \text{\normalfont KL}(P_\theta\|P^\star)$ is $0$.

Moreover, if $\hat\theta$ is any minimizer, then
\begin{equation}
\label{eq:app:reverse_kl_minimizer_zero}
\text{\normalfont KL}(P_{\hat\theta}\|P^\star)=0.
\end{equation}
By the equality condition of Gibbs' inequality (equivalently, $\text{\normalfont KL}(P\|Q)=0$ iff $P=Q$ as distributions on $\mathcal{T}_x$),
\eqref{eq:app:reverse_kl_minimizer_zero} implies
\begin{equation}
\label{eq:app:reverse_kl_minimizer_equals}
P_{\hat\theta}(\tau)=P^\star(\tau),
\qquad \forall \tau\in\mathcal{T}_x.
\end{equation}
Conversely, any $\theta$ satisfying $P_\theta=P^\star$ yields $\text{\normalfont KL}(P_\theta\|P^\star)=0$ and thus is a global minimizer.
Therefore,
\begin{equation}
\label{eq:app:argmin_reverse}
\arg\min_{\theta}\ \text{\normalfont KL}(P_\theta\|P^\star)
=
\{\theta:\ P_\theta=P^\star\}.
\end{equation}

\textbf{Step 2: Minimizers of the forward KL.}
By the definition \eqref{eq:app:kl_def}, if there exists $\tau\in\mathcal{T}_x$ such that $P^\star(\tau)>0$ but $P_\theta(\tau)=0$, then
\begin{equation}
\label{eq:app:forward_kl_infty}
\text{\normalfont KL}(P^\star\|P_\theta)=+\infty.
\end{equation}
In contrast, by \eqref{eq:app:realizability},
\begin{equation}
\label{eq:app:forward_kl_zero_at_star}
\text{\normalfont KL}(P^\star\|P_{\theta^\star})
=
\text{\normalfont KL}(P^\star\|P^\star)
=0.
\end{equation}
Thus any $\theta$ satisfying \eqref{eq:app:forward_kl_infty} cannot be a minimizer.

For all $\theta$ such that $\text{\normalfont KL}(P^\star\|P_\theta)$ is finite, KL non-negativity gives
\begin{equation}
\label{eq:app:kl_nonneg_forward}
\text{\normalfont KL}(P^\star\|P_\theta)\ge 0.
\end{equation}
Combining \eqref{eq:app:forward_kl_zero_at_star} and \eqref{eq:app:kl_nonneg_forward}, the global minimum value is again $0$.
Hence any minimizer $\tilde\theta$ must satisfy
\begin{equation}
\label{eq:app:forward_kl_minimizer_zero}
\text{\normalfont KL}(P^\star\|P_{\tilde\theta})=0,
\end{equation}
which implies (by the equality condition of Gibbs' inequality)
\begin{equation}
\label{eq:app:forward_kl_minimizer_equals}
P_{\tilde\theta}(\tau)=P^\star(\tau),
\qquad \forall \tau\in\mathcal{T}_x.
\end{equation}
Therefore,
\begin{equation}
\label{eq:app:argmin_forward}
\arg\min_{\theta}\ \text{\normalfont KL}(P^\star\|P_\theta)
=
\{\theta:\ P_\theta=P^\star\}.
\end{equation}

\textbf{Step 3: Forward-KL minimization is equivalent to cross-entropy minimization.}
Fix the same prompt $x$ and consider the objective $\theta\mapsto \text{\normalfont KL}(P^\star\|P_\theta)$.
If there exists $\tau\in\mathcal{T}_x$ such that $P^\star(\tau)>0$ but $P_\theta(\tau)=0$, then by definition
$\text{\normalfont KL}(P^\star\|P_\theta)=+\infty$, and also $\mathbb{E}_{\tau\sim P^\star}\!\big[-\log p_\theta(\tau\mid x)\big]=+\infty$.
Hence the claimed equivalence holds trivially. In the remainder assume $P^\star \ll P_\theta$.

By definition of forward KL divergence,
\begin{align}
\text{\normalfont KL}\!\big(P^\star\|P_\theta\big)
&\triangleq
\sum_{\tau\in\mathcal{T}_x} P^\star(\tau)\log\frac{P^\star(\tau)}{P_\theta(\tau)}
\nonumber\\
&=
\sum_{\tau\in\mathcal{T}_x} P^\star(\tau)\log P^\star(\tau)
-\sum_{\tau\in\mathcal{T}_x} P^\star(\tau)\log P_\theta(\tau)
\nonumber\\
&=
\mathbb{E}_{\tau\sim P^\star}\big[\log p^\star(\tau\mid x)\big]
+\mathbb{E}_{\tau\sim P^\star}\big[-\log p_\theta(\tau\mid x)\big].
\label{eq:app_thm3_decomp}
\end{align}
The first term in \eqref{eq:app_thm3_decomp} depends only on $P^\star$ (hence is independent of $\theta$).
Therefore,
\begin{equation}
\label{eq:app:argmin_forward_ce}
\arg\min_{\theta}\ \text{\normalfont KL}\!\big(P^\star\|P_\theta\big)
\;=\;
\arg\min_{\theta}\ \mathbb{E}_{\tau\sim P^\star}\big[-\log p_\theta(\tau\mid x)\big],
\end{equation}
which proves (~\ref{eq:forward_likelihood}).
\end{proof}

\camera{
\subsection{Proof of Lemma~\ref{lem:local-kl-surrogate}}
\label{app:proof-local-kl-surrogate}

\begin{proof}
Fix a prompt \(x\). For brevity, write
\[
P(\tau)=P^\star(\tau)=p^\star(\tau\mid x),
\qquad
Q(\tau)=P_\theta(\tau)=p_\theta(\tau\mid x),
\]
and let \(\mathcal T_x=\operatorname{supp}(P)\). Since \(\mathcal T_x\) is finite and \(P\) is positive on its support,
\[
m_x:=\min_{\tau\in\mathcal T_x}P(\tau)>0.
\]
Choose \(\varepsilon_x>0\) small enough such that
\[
\mathrm{TV}(Q,P)\le \varepsilon_x
\quad\Longrightarrow\quad
\left\|
\frac{Q-P}{P}
\right\|_\infty
\le \rho
\]
for some fixed \(\rho<1\). For example, one may take
\(\varepsilon_x<m_x/2\). Define the relative perturbation
\[
r(\tau):=\frac{Q(\tau)-P(\tau)}{P(\tau)}.
\]
Then \(Q(\tau)=P(\tau)(1+r(\tau))\), \(\|r\|_\infty\le\rho<1\), and
\[
\sum_{\tau\in\mathcal T_x}P(\tau)r(\tau)
=
\sum_{\tau\in\mathcal T_x}(Q(\tau)-P(\tau))
=
0.
\]

We now compare the two KL directions. For \(|u|\le\rho<1\), Taylor expansion around \(u=0\) gives, uniformly in \(u\),
\[
(1+u)\log(1+u)
=
u+\frac12u^2+O_\rho(u^3),
\]
and
\[
-\log(1+u)
=
-u+\frac12u^2+O_\rho(u^3).
\]
Therefore,
\[
\begin{aligned}
\mathrm{KL}(Q\|P)
&=
\sum_{\tau\in\mathcal T_x}
P(\tau)(1+r(\tau))\log(1+r(\tau)) \\
&=
\sum_{\tau\in\mathcal T_x}
P(\tau)
\left[
r(\tau)+\frac12r(\tau)^2+O_\rho(|r(\tau)|^3)
\right],
\end{aligned}
\]
and
\[
\begin{aligned}
\mathrm{KL}(P\|Q)
&=
\sum_{\tau\in\mathcal T_x}
P(\tau)\big[-\log(1+r(\tau))\big] \\
&=
\sum_{\tau\in\mathcal T_x}
P(\tau)
\left[
-r(\tau)+\frac12r(\tau)^2+O_\rho(|r(\tau)|^3)
\right].
\end{aligned}
\]
The first-order terms vanish in both expressions because
\(\sum_{\tau}P(\tau)r(\tau)=0\). Hence both KL divergences share the same quadratic term:
\[
\frac12
\sum_{\tau\in\mathcal T_x}
P(\tau)r(\tau)^2
=
\frac12
\sum_{\tau\in\mathcal T_x}
\frac{(Q(\tau)-P(\tau))^2}{P(\tau)}.
\]
Their difference is therefore controlled by the third-order remainders:
\[
\left|
\mathrm{KL}(Q\|P)-\mathrm{KL}(P\|Q)
\right|
\le
C_\rho
\sum_{\tau\in\mathcal T_x}
P(\tau)|r(\tau)|^3
\]
for some constant \(C_\rho<\infty\). Substituting the definition of \(r\),
\[
\sum_{\tau\in\mathcal T_x}
P(\tau)|r(\tau)|^3
=
\sum_{\tau\in\mathcal T_x}
\frac{|Q(\tau)-P(\tau)|^3}{P(\tau)^2}
\le
\frac{1}{m_x^2}
\sum_{\tau\in\mathcal T_x}
|Q(\tau)-P(\tau)|^3.
\]
Finally,
\[
\sum_{\tau\in\mathcal T_x}|Q(\tau)-P(\tau)|^3
\le
\left(
\sum_{\tau\in\mathcal T_x}|Q(\tau)-P(\tau)|
\right)^3
=
8\,\mathrm{TV}(Q,P)^3.
\]
Combining the above inequalities, there exists a finite constant
\[
C_x<\infty
\]
depending only on \(x\), \(P^\star\), and the chosen local neighborhood, such that
\[
\left|
\mathrm{KL}(Q\|P)-\mathrm{KL}(P\|Q)
\right|
\le
C_x\,\mathrm{TV}(Q,P)^3.
\]
Substituting back \(Q=P_\theta\) and \(P=P^\star\) proves the claim.

The shared quadratic term also shows that Forward-KL and Reverse-KL have the same local second-order geometry around \(p^\star\), even when \(p^\star\notin\Pi_\theta\).
\end{proof}
}

\subsection{Proof of Thm~\ref{thm:weighted_sft}}
\begin{proof}
Fix a prompt $x$ and let $\mathcal{T}_x$ be the (countable) set of valid trajectories.
Let $p^\star(\cdot\mid x)$ be defined as in Thm.~\ref{thm:optimal_traj}, i.e.,
\begin{align}
p^\star(\tau\mid x)
=
\frac{1}{Z(x)}\,p_{\text{\normalfont ref}}(\tau\mid x)\exp\!\Big(\frac{R(\tau)}{\beta}\Big),
\qquad
Z(x)\triangleq \sum_{\tau\in\mathcal{T}_x}p_{\text{\normalfont ref}}(\tau\mid x)\exp\!\Big(\frac{R(\tau)}{\beta}\Big).
\label{eq:app_thm4_pstar}
\end{align}
In particular, $p^\star(\cdot\mid x)\ll p_{\text{\normalfont ref}}(\cdot\mid x)$, and the Radon--Nikodym derivative is
\begin{align}
w(\tau\mid x)
\;\triangleq\;
\frac{p^\star(\tau\mid x)}{p_{\text{\normalfont ref}}(\tau\mid x)}
=
\frac{1}{Z(x)}\exp\!\Big(\frac{R(\tau)}{\beta}\Big),
\label{eq:app_thm4_weight}
\end{align}
for all $\tau$ with $p_{\text{\normalfont ref}}(\tau\mid x)>0$ (and we may set $w(\tau\mid x)=0$ when
$p_{\text{\normalfont ref}}(\tau\mid x)=0$).

Now take the integrand $g_\theta(\tau)\triangleq -\log p_\theta(\tau\mid x)$ (extended-valued if needed).
By a direct change of measure,
\begin{align}
\mathbb{E}_{\tau\sim p^\star(\cdot\mid x)}\!\big[g_\theta(\tau)\big]
&=
\sum_{\tau\in\mathcal{T}_x} p^\star(\tau\mid x)\,g_\theta(\tau) \\
&=\sum_{\tau\in\mathcal{T}_x} p_{\text{\normalfont ref}}(\tau\mid x)\,\frac{p^\star(\tau\mid x)}{p_{\text{\normalfont ref}}(\tau\mid x)}\,g_\theta(\tau)
\nonumber\\
&=
\mathbb{E}_{\tau\sim p_{\text{\normalfont ref}}(\cdot\mid x)}\!\big[w(\tau\mid x)\,g_\theta(\tau)\big] \\
&=
\mathbb{E}_{\tau\sim p_{\text{\normalfont ref}}(\cdot\mid x)}\!\Big[w(\tau\mid x)\big(-\log p_\theta(\tau\mid x)\big)\Big],
\label{eq:app_thm4_is}
\end{align}
where $w(\tau\mid x)$ is given by \eqref{eq:app_thm4_weight}. Hence the two objectives in (17) are
identical for every $\theta$, and therefore have the same set of minimizers. This proves Thm.~\ref{thm:weighted_sft}.
\end{proof}

\subsection{Proof of Prop.~\ref{prop:bias_variance}}
\begin{proof}
Fix a prompt $x$. Let $\tau\sim p_{\text{\normalfont ref}}(\cdot\mid x)$ be a random trajectory with (random) effective length
$L\equiv L(\tau)\in\{1,2,\dots\}$. Define per-step losses $\ell_t(\tau)\triangleq -\log \pi_\theta(y_t\mid x_t)$
and the corresponding per-step gradients $\nabla \ell_t(\tau)$.
Let $w(\tau)\ge 0$ be the importance weight, and define
\begin{align}
g_{\text{\normalfont all}}(\tau)\triangleq w(\tau)\sum_{t=1}^{L} \nabla \ell_t(\tau),\qquad
g_{\text{\normalfont term}}(\tau)\triangleq w(\tau)\nabla \ell_{L}(\tau),\qquad
\Delta g(\tau)\triangleq g_{\text{\normalfont all}}(\tau)-g_{\text{\normalfont term}}(\tau)
= w(\tau)\sum_{t=1}^{L-1}\nabla \ell_t(\tau).
\label{eq:app_prop5_defs}
\end{align}
Assume the almost-sure bounds
\begin{align}
0\le w(\tau)\le W_{\max},\qquad \|\nabla \ell_t(\tau)\|\le G_{\max}\ \ \text{for all valid }t.
\label{eq:app_prop5_bounds}
\end{align}

\textbf{Bias bound.}
Since $\mathbb{E}[g_{\text{\normalfont all}}]-\mathbb{E}[g_{\text{\normalfont term}}]=\mathbb{E}[\Delta g]$, it suffices to bound
$\|\mathbb{E}[\Delta g]\|$. By Jensen's inequality and \eqref{eq:app_prop5_defs}--\eqref{eq:app_prop5_bounds},
\begin{align}
\big\|\mathbb{E}[\Delta g]\big\|
\le \mathbb{E}\big[\|\Delta g\|\big]
&= \mathbb{E}\Big[\Big\|w(\tau)\sum_{t=1}^{L-1}\nabla \ell_t(\tau)\Big\|\Big] \\
&\le \mathbb{E}\Big[w(\tau)\sum_{t=1}^{L-1}\|\nabla \ell_t(\tau)\|\Big]
\nonumber\\
&\le \mathbb{E}\big[W_{\max}\,(L-1)\,G_{\max}\big] \\
&= W_{\max}G_{\max}\cdot \mathbb{E}\big[L-1\big],
\end{align}
which proves \eqref{eq:bias_bound}.

\textbf{Variance reduction.}
For random vectors $a,b$ with finite second moments, define
\[
\text{\normalfont Var}(a)\triangleq \mathbb{E}\big[\|a-\mathbb{E}[a]\|^2\big],\qquad
\text{\normalfont Cov}(a,b) \triangleq \mathbb{E}\!\left[\langle a-\mathbb{E}[a],\, b-\mathbb{E}[b]\rangle\right].
\]
Using $g_{\text{\normalfont all}}=g_{\text{\normalfont term}}+\Delta g$ and the polarization identity,
\begin{align}
\text{\normalfont Var}(g_{\text{\normalfont all}})
&=\mathbb{E}\Big[\big\|(g_{\text{\normalfont term}}-\mathbb{E}[g_{\text{\normalfont term}}])+(\Delta g-\mathbb{E}[\Delta g])\big\|^2\Big]
\nonumber\\
&=\text{\normalfont Var}(g_{\text{\normalfont term}})+\text{\normalfont Var}(\Delta g)+2\,\text{\normalfont Cov}(g_{\text{\normalfont term}},\Delta g).
\label{eq:app_prop5_var_decomp}
\end{align}
Under the stated condition
\[
\text{\normalfont Var}(\Delta g)
+2\,\text{\normalfont Cov}(g_{\text{\normalfont term}},\Delta g)>0,
\]
\eqref{eq:app_prop5_var_decomp} implies
$\text{\normalfont Var}(g_{\text{\normalfont all}})>\text{\normalfont Var}(g_{\text{\normalfont term}})$, i.e.,
\[
\text{\normalfont Var}(g_{\text{\normalfont term}})<\text{\normalfont Var}(g_{\text{\normalfont all}}),
\]
which proves \eqref{eq:variance_reduction}.
\end{proof}

\subsection{Proof of Prop.~\ref{prop:concentration}}
\begin{proof}
Fix a prompt $x$. Let $\tau_1,\dots,\tau_K$ be i.i.d.\ samples from $p_{\text{\normalfont ref}}(\cdot\mid x)$, and define
\begin{align}
X_i
&\triangleq
\exp\!\Big(\frac{R(\tau_i)}{\beta}\Big),
\\
Z(x)
&\triangleq
\mathbb{E}_{\tau\sim p_{\text{\normalfont ref}}(\cdot\mid x)}
\Big[\exp\!\big(\tfrac{R(\tau)}{\beta}\big)\Big],
\\
\widehat{Z}(x)
&\triangleq
\frac{1}{K}\sum_{i=1}^K X_i.
\end{align}
By the bounded-return assumption $R(\tau)\in[R_{\min},R_{\max}]$, we have for every $i$,
\begin{align}
m_\beta
&\triangleq
\exp\!\Big(\frac{R_{\min}}{\beta}\Big)
\;\le\;
X_i
\;\le\;
\exp\!\Big(\frac{R_{\max}}{\beta}\Big)
\triangleq
M_\beta,
\end{align}
and hence the range satisfies $\Delta_\beta\triangleq M_\beta-m_\beta$.

Applying Hoeffding's inequality for independent bounded random variables yields, for any $\epsilon>0$,
\begin{align}
\mathbb{P}\Big(\big|\widehat{Z}(x)-Z(x)\big|\ge \epsilon\Big)
&\le
2\exp\!\Big(
-\frac{2K^2\epsilon^2}{\sum_{i=1}^K (M_\beta-m_\beta)^2}
\Big).
\end{align}
Moreover,
\begin{align}
\sum_{i=1}^K (M_\beta-m_\beta)^2
&=
\sum_{i=1}^K \Delta_\beta^2
\\
&=
K\Delta_\beta^2,
\end{align}
and therefore
\begin{align}
\mathbb{P}\Big(\big|\widehat{Z}(x)-Z(x)\big|\ge \epsilon\Big)
&\le
2\exp\!\Big(-\frac{2K\epsilon^2}{\Delta_\beta^2}\Big),
\end{align}
which proves \eqref{eq:hoeffding}.

To obtain \eqref{eq:sample_complexity}, it suffices to enforce
\begin{align}
2\exp\!\Big(-\frac{2K\epsilon^2}{\Delta_\beta^2}\Big)
&\le
\delta.
\end{align}
Taking logarithms and rearranging,
\begin{align}
-\frac{2K\epsilon^2}{\Delta_\beta^2}
&\le
\log\Big(\frac{\delta}{2}\Big),
\\
\frac{2K\epsilon^2}{\Delta_\beta^2}
&\ge
\log\Big(\frac{2}{\delta}\Big),
\\
K
&\ge
\frac{\Delta_\beta^2}{2\epsilon^2}\log\Big(\frac{2}{\delta}\Big),
\end{align}
which is exactly \eqref{eq:sample_complexity}.
\end{proof}

\section{Experiments}

\subsection{Baseline}

\textbf{\textsc{SFT-5Turn}.}
We fine-tune the model with SFT on the same offline correction trajectories as \name{}, with up to five turns per example. Since in our setting only the final-turn response can be correct, we follow \name{} and train only on the final-turn response conditioned on the full preceding dialogue context. This baseline is equivalent to \name{} with all trajectory weights set to 1, and thus serves as the unweighted ablation.

\textbf{\textsc{STaR-2Turn}.}
We implement STaR-2Turn \citep{zelikman2022star} as a two-turn self-training baseline adapted to our correction protocol. For each training prompt, we first sample a turn-1 response from the current model and evaluate it using the same verifier as in our main setup. If the turn-1 response is incorrect, we append the fixed lightweight negative feedback and prompt the model to produce a turn-2 revision. We keep only those trajectories whose turn-2 response is verified as correct, and then fine-tune the model with SFT on the turn-2 responses conditioned on the full two-turn context. Under our setting, only the final-turn response can be correct, so we do not fit the turn-1 responses. This yields an implementation of STaR that bootstraps training data from verified two-turn corrections.

\textbf{\textsc{SCoRe-2Turn}.}
SCoRe \citep{kumar2024training} studies learning from corrective feedback with verifiable rewards in a two-attempt setting. The model produces an initial answer, receives lightweight negative feedback if it is incorrect, and then generates a second-turn revision. Training uses KL regularization to control deviation from a reference policy and encourages successful correction from an incorrect first attempt to a correct second attempt. In our comparisons, we treat SCoRe as a two-turn training approach under this fixed correction protocol.

\textbf{\textsc{UFO-5Turn}.}
UFO \citep{liu2025simple} considers multi-turn trial-and-error prompting and training under generic negative feedback. The model repeatedly attempts the same problem, and after each incorrect attempt it receives a fixed message such as ``Incorrect, please try again'' before producing the next response. The process stops once a correct answer is obtained or a maximum turn budget is reached. In our setting, we refer to UFO with a maximum of five turns as \textsc{UFO-5Turn}.

\subsection{Setup}
\label{sec:training_setup}
During the offline trajectory generation phase, we set the number of rollouts to $K=16$ with a sampling temperature of $1.0$. 
The maximum trajectory length is restricted to $T=5$, and the maximum number of new tokens is set to 512. 
In the optimization phase, we utilize a global batch size of 128 and train the model for 200 steps. 
The hyperparameters are configured as $\beta=0.1$ and $\gamma=0.9$, and we set the repetition penalty coefficient to $\lambda=0.5$.

\subsection{Benchmark}
\label{sec:benchmark}
We evaluate multi-turn correction on benchmarks spanning mathematical reasoning and general-domain knowledge and science reasoning.
Because training uses only the MATH subset of MetaMathQA, we treat the mathematical reasoning benchmarks as same-family evaluations and the general reasoning benchmarks as out-of-domain evaluations that test whether correction behavior transfers beyond the training task.
Table~\ref{tab:benchmarks} summarizes the benchmark scale.

\begin{table}[h]
\centering
\small
\caption{Benchmarks used for evaluation. ``N/A'' indicates that the benchmark is used only for evaluation in our setting and we do not use a predefined training split from that dataset.}
\begin{tabular}{l l r r}
\toprule
Domain & Benchmark & Training Size & Test Size \\
\midrule
\multirow{3}{*}{Mathematical Reasoning}
& MATH & 7{,}500 & 5{,}000 \\
& MATH500 & N/A & 500 \\
& TheoremQA & N/A & 800 \\
\midrule
\multirow{3}{*}{General Reasoning}
& MMLU-Redux & N/A & 5{,}700 \\
& MMLU-Pro & N/A & 12{,}032 \\
& GPQA-diamond & N/A & 198 \\
\bottomrule
\end{tabular}
\label{tab:benchmarks}
\end{table}

\textbf{Mathematical reasoning.}
We use MATH \citep{hendrycks2021measuring} as the primary math benchmark with competition-style problems that require multi-step derivations. We also report MATH500 \citep{hendrycks2021measuring}, a 500-problem evaluation subset commonly used for faster iteration. In addition, we include TheoremQA (ThmQA) \citep{chen2023theoremqa}, which tests theorem-driven problem solving on STEM questions and emphasizes selecting and applying appropriate theorems.

\textbf{General reasoning.}
To evaluate cross-domain transfer, we include MMLU-Redux (MMLU-R) \citep{gema2025we}, a cleaned and re-annotated version of MMLU designed to reduce ambiguity and labeling errors. We further evaluate on MMLU-Pro (MMLU-P) \citep{wang2024mmlu}, which increases difficulty by expanding answer options and filtering trivial items. Finally, we use GPQA-diamond \citep{rein2024gpqa}, a high-quality and challenging subset of GPQA that focuses on graduate-level science questions.

\subsection{Additional Model Results}
\label{sec:additional_model_results}

To further evaluate whether \name{} scales to a stronger backbone, we additionally train and evaluate Qwen2.5-7B-Instruct under the same protocol. Table~\ref{tab:qwen25_7b_results} shows that \name{} improves the all-benchmark average from 64.8\% to 68.3\%, and slightly outperforms the online multi-turn RL baseline while retaining the largest gains on MATH and MATH500.

\begin{table*}[t]
\caption{Additional model results on Qwen2.5-7B-Instruct after training on MetaMathQA (MATH subset).
We report \textbf{multi@5} accuracy (\%) with a maximum budget of 5 turns. Deltas are computed w.r.t.\ the base model.}
\centering
\small
\setlength{\tabcolsep}{2.1pt}
\renewcommand{\arraystretch}{1.08}

\setlength{\DeltaW}{2.1em}
\newcommand{\deltabox}[1]{%
  \hspace{0.10em}%
  \makebox[0pt][l]{\smash{\raisebox{-0.65ex}{\tiny #1}}}%
  \hspace{\DeltaW}%
}
\newcommand{\up}[1]{\deltabox{\textcolor{red}{\ensuremath{\uparrow}#1}}}
\newcommand{\down}[1]{\deltabox{\textcolor{teal}{\ensuremath{\downarrow}#1}}}
\newcommand{\same}[1]{\deltabox{\textcolor{gray}{\ensuremath{\rightarrow}#1}}}

\begin{tabular}{@{}ll!{\vrule width 0.6pt}cccc!{\vrule width 0.6pt}cccc!{\vrule width 0.6pt}c@{}}
\toprule[1.2pt]
\multicolumn{2}{c!{\vrule width 0.6pt}}{} &
\multicolumn{4}{c!{\vrule width 0.6pt}}{\textbf{Math}} &
\multicolumn{4}{c!{\vrule width 0.6pt}}{\textbf{General}} &
\textbf{All} \\
\cmidrule(lr){3-6}\cmidrule(lr){7-10}\cmidrule(lr){11-11}
\textbf{Model} & \textbf{Method} &
\textbf{MATH} & \textbf{MATH500} & \textbf{ThmQA} & \textbf{Avg} &
\textbf{MMLU-R} & \textbf{MMLU-P} & \textbf{GPQA} & \textbf{Avg} &
\textbf{Avg} \\
\midrule[0.9pt]

\multirow{4}{*}{\shortstack[l]{Qwen2.5-7B\\-Instruct}}
& Base        & 62.2 & 64.0 & 37.1 & 54.4 & 90.5 & 68.9 & 66.1 & 75.2 & 64.8 \\
& SFT-5turn   & 64.2\up{2.0} & 64.4\up{0.4} & 37.2\up{0.1} & 55.3\up{0.9} & 90.5\same{0.0} & 69.3\up{0.4} & 68.1\up{2.0} & 76.0\up{0.8} & 65.6\up{0.8} \\
& UFO-5turn   & 66.3\up{4.1} & 66.2\up{2.2} & \textbf{38.6}\up{1.5} & 57.0\up{2.6} & \textbf{92.3}\up{1.8} & 70.6\up{1.7} & \textbf{73.2}\up{7.1} & \textbf{78.7}\up{3.5} & 67.9\up{3.1} \\
\rowcolor{gray!15}
& DRIFT-5turn & \textbf{67.6}\up{5.4} & \textbf{68.6}\up{4.6} & 38.5\up{1.4} & \textbf{58.2}\up{3.8} & 91.2\up{0.7} & \textbf{71.2}\up{2.3} & 72.7\up{6.6} & 78.4\up{3.2} & \textbf{68.3}\up{3.5} \\

\bottomrule[1.2pt]
\end{tabular}
\vspace{-10pt}
\label{tab:qwen25_7b_results}
\end{table*}

\subsection{A Simple Rollout-Refresh Variant}
\label{sec:iterative_rollout_refresh}

A natural limitation of a single offline rollout collection is coverage: if the reference policy does not produce a useful correction trajectory, importance weighting cannot recover it during optimization. To test whether this limitation can be partially mitigated without fully reverting to online RL, we run a simple two-stage rollout-refresh variant on Qwen2.5-3B-Instruct. We first train \name{} for 100 steps, use the resulting checkpoint to regenerate correction trajectories, and then continue \name{} training for another 100 steps. The total optimization budget is kept the same as the single-stage 200-step \name{} run.

Table~\ref{tab:iterative_rollout_refresh} shows that this simple refresh variant improves the all-benchmark average from 60.5\% to 61.2\%. The gain is modest, but it suggests that periodically refreshing the rollout distribution can partially address the coverage limitation of one-shot offline rollouts. This result is best interpreted as an initial diagnostic rather than a complete replacement for online exploration; designing more systematic rollout-refresh schedules is left for future work.

\begin{table*}[t]
\caption{A simple rollout-refresh variant on Qwen2.5-3B-Instruct.
We report \textbf{multi@5} accuracy (\%) with a maximum budget of 5 turns. Deltas are computed w.r.t.\ the base model, and both \name{} variants use the same total optimization budget.}
\centering
\small
\setlength{\tabcolsep}{2.1pt}
\renewcommand{\arraystretch}{1.08}

\setlength{\DeltaW}{2.1em}
\newcommand{\deltabox}[1]{%
  \hspace{0.10em}%
  \makebox[0pt][l]{\smash{\raisebox{-0.65ex}{\tiny #1}}}%
  \hspace{\DeltaW}%
}
\newcommand{\up}[1]{\deltabox{\textcolor{red}{\ensuremath{\uparrow}#1}}}
\newcommand{\down}[1]{\deltabox{\textcolor{teal}{\ensuremath{\downarrow}#1}}}
\newcommand{\same}[1]{\deltabox{\textcolor{gray}{\ensuremath{\rightarrow}#1}}}

\begin{tabular}{@{}ll!{\vrule width 0.6pt}cccc!{\vrule width 0.6pt}cccc!{\vrule width 0.6pt}c@{}}
\toprule[1.2pt]
\multicolumn{2}{c!{\vrule width 0.6pt}}{} &
\multicolumn{4}{c!{\vrule width 0.6pt}}{\textbf{Math}} &
\multicolumn{4}{c!{\vrule width 0.6pt}}{\textbf{General}} &
\textbf{All} \\
\cmidrule(lr){3-6}\cmidrule(lr){7-10}\cmidrule(lr){11-11}
\textbf{Model} & \textbf{Method} &
\textbf{MATH} & \textbf{MATH500} & \textbf{ThmQA} & \textbf{Avg} &
\textbf{MMLU-R} & \textbf{MMLU-P} & \textbf{GPQA} & \textbf{Avg} &
\textbf{Avg} \\
\midrule[0.9pt]

\multirow{3}{*}{\shortstack[l]{Qwen2.5-3B\\-Instruct}}
& Base        & 38.3 & 40.2 & 26.0 & 34.8 & 76.8 & 49.0 & 47.9 & 57.9 & 46.4 \\
& DRIFT-5turn & 55.9\up{17.6} & 58.2\up{18.0} & 34.3\up{8.3} & 49.5\up{14.7} & 84.6\up{7.8} & 57.2\up{8.2} & \textbf{72.7}\up{24.8} & 71.5\up{13.6} & 60.5\up{14.1} \\
& \shortstack[l]{+refresh} & \textbf{56.7}\up{18.4} & \textbf{59.6}\up{19.4} & \textbf{34.8}\up{8.8} & \textbf{50.4}\up{15.6} & \textbf{85.9}\up{9.1} & \textbf{57.7}\up{8.7} & \textbf{72.7}\up{24.8} & \textbf{72.1}\up{14.2} & \textbf{61.2}\up{14.8} \\

\bottomrule[1.2pt]
\end{tabular}
\vspace{-10pt}
\label{tab:iterative_rollout_refresh}
\end{table*}

\subsection{\name{} as an Initialization for Online RL}
\label{sec:drift_rl_initialization}

We further examine whether \name{} can be used as an initialization before online multi-turn RL. This experiment uses Qwen2.5-3B-Instruct and keeps the total optimization budget fixed at 200 steps. Pure UFO and pure \name{} are trained for 200 steps, while the hybrid schedules use the first 100 steps for either SFT or \name{} and the remaining 100 steps for UFO.

Table~\ref{tab:drift_rl_initialization} reports all improvements relative to the base model. \name{} followed by UFO achieves the best all-benchmark average among these schedules and improves over pure UFO by 2.3 points. In contrast, SFT followed by UFO does not improve over pure UFO, suggesting that the benefit is not simply due to adding an offline warm-up stage. These results provide preliminary evidence that \name{} can serve as a useful warm start for online RL in this verifier-guided setting, although a broader study of hybrid training schedules is left for future work.

\begin{table*}[t]
\caption{\name{} as an initialization for online multi-turn RL on Qwen2.5-3B-Instruct.
We report \textbf{multi@5} accuracy (\%) with a maximum budget of 5 turns. Deltas are computed w.r.t.\ the base model.}
\centering
\small
\setlength{\tabcolsep}{1.8pt}
\renewcommand{\arraystretch}{1.10}

\setlength{\DeltaW}{2.0em}
\newcommand{\deltabox}[1]{%
  \hspace{0.10em}%
  \makebox[0pt][l]{\smash{\raisebox{-0.65ex}{\tiny #1}}}%
  \hspace{\DeltaW}%
}
\newcommand{\up}[1]{\deltabox{\textcolor{red}{\ensuremath{\uparrow}#1}}}
\newcommand{\down}[1]{\deltabox{\textcolor{teal}{\ensuremath{\downarrow}#1}}}
\newcommand{\same}[1]{\deltabox{\textcolor{gray}{\ensuremath{\rightarrow}#1}}}

\begin{tabular}{@{}ll!{\vrule width 0.6pt}cccc!{\vrule width 0.6pt}cccc!{\vrule width 0.6pt}c@{}}
\toprule[1.2pt]
\multicolumn{2}{c!{\vrule width 0.6pt}}{} &
\multicolumn{4}{c!{\vrule width 0.6pt}}{\textbf{Math}} &
\multicolumn{4}{c!{\vrule width 0.6pt}}{\textbf{General}} &
\textbf{All} \\
\cmidrule(lr){3-6}\cmidrule(lr){7-10}\cmidrule(lr){11-11}
\textbf{Model} & \textbf{Method} &
\textbf{MATH} & \textbf{MATH500} & \textbf{ThmQA} & \textbf{Avg} &
\textbf{MMLU-R} & \textbf{MMLU-P} & \textbf{GPQA} & \textbf{Avg} &
\textbf{Avg} \\
\midrule[0.9pt]

\multirow{5}{*}{\shortstack[l]{Qwen2.5-3B\\-Instruct}}
& Base
& 38.3 & 40.2 & 26.0 & 34.8
& 76.8 & 49.0 & 47.9 & 57.9
& 46.4 \\

& UFO-5turn
& 55.5\up{17.2} & 56.4\up{16.2} & 33.5\up{7.5} & 48.5\up{13.7}
& 87.0\up{10.2} & 57.1\up{8.1} & 71.2\up{23.3} & 71.8\up{13.9}
& 60.2\up{13.8} \\

& DRIFT-5turn
& 55.9\up{17.6} & 58.2\up{18.0} & 34.3\up{8.3} & 49.5\up{14.7}
& 84.6\up{7.8} & 57.2\up{8.2} & \textbf{72.7}\up{24.8} & 71.5\up{13.6}
& 60.5\up{14.1} \\

& SFT+UFO
& 58.3\up{20.0} & 57.4\up{17.2} & 34.5\up{8.5} & 50.1\up{15.3}
& 82.1\up{5.3} & 55.6\up{6.6} & 62.1\up{14.2} & 66.6\up{8.7}
& 58.3\up{11.9} \\

& DRIFT+UFO
& \textbf{60.3}\up{22.0} & \textbf{61.4}\up{21.2} & \textbf{34.8}\up{8.8} & \textbf{52.2}\up{17.4}
& \textbf{87.6}\up{10.8} & \textbf{58.5}\up{9.5} & 72.2\up{24.3} & \textbf{72.8}\up{14.9}
& \textbf{62.5}\up{16.1} \\

\bottomrule[1.2pt]
\end{tabular}
\vspace{-10pt}
\label{tab:drift_rl_initialization}
\end{table*}

\subsection{Prompt}
In this section, we present the prompts used for rollout and evaluation.

\begin{PromptBox}[Rollout for MetaMathQA]
\textbf{\textless system\textgreater} \\
You're a helpful assistant.  \\
\textbf{\textless /system\textgreater} \\
\textbf{\textless user\textgreater} \\
You are solving Math problems. Only give the final answer between <answer> and </answer>. \\
Turn 1: \\
State: \textbf{[Problem]} \\
You have 5 actions left. Always output: <think> [Your thoughts] </think> <answer> [your answer] </answer> with no extra text. Strictly follow this format. Max response length: 400 words (tokens). \\
\textbf{\textless /user\textgreater} \\
\end{PromptBox}

\begin{PromptBox}[Evaluation for MATH / MATH500]
\textbf{\textless system\textgreater} \\
You're a helpful assistant.  \\
\textbf{\textless /system\textgreater} \\
\textbf{\textless user\textgreater} \\
You are solving Math problems. Only give the final answer between <answer> and </answer>. \\
Problem: \textbf{[Problem]} \\
Always output: <think> [Your thoughts] </think> <answer> [your answer] </answer> with no extra text. Strictly follow this format. \\
\textbf{\textless /user\textgreater} \\
\end{PromptBox}

\begin{PromptBox}[Evaluation for TheoremQA]
\textbf{\textless system\textgreater} \\
You're a helpful assistant.  \\
\textbf{\textless /system\textgreater} \\
\textbf{\textless user\textgreater} \\
You are solving Math problems. Only give the final answer between <answer> and </answer>. \\
Problem: <image> \\
\textbf{[Problem]} \\
Always output: <think> [Your thoughts] </think> <answer> [your answer] </answer> with no extra text. Strictly follow this format. \\
\textbf{\textless /user\textgreater} \\
\end{PromptBox}

\begin{PromptBox}[Evaluation for MMLU-Redux / MMLU-Pro]
\textbf{\textless system\textgreater} \\
You're a helpful assistant.  \\
\textbf{\textless /system\textgreater} \\
\textbf{\textless user\textgreater} \\
You are solving multiple-choice questions. Only give the final answer letter (A-J) between <answer> and </answer>. \\
Problem: \textbf{[Problem]}\\
Always output: <think> [Your thoughts] </think> <answer> [your answer] </answer> with no extra text. Strictly follow this format. \\
\textbf{\textless /user\textgreater} \\
\end{PromptBox}

\begin{PromptBox}[Evaluation for GPQA]
\textbf{\textless system\textgreater} \\
You're a helpful assistant.  \\
\textbf{\textless /system\textgreater} \\
\textbf{\textless user\textgreater} \\
You are solving multiple-choice questions. Only give the final answer letter (A-D) between <answer> and </answer>. \\
Problem: \textbf{[Problem]}\\
Always output: <think> [Your thoughts] </think> <answer> [your answer] </answer> with no extra text. Strictly follow this format. \\
\textbf{\textless /user\textgreater} \\
\end{PromptBox}

\subsection{Ablation on return shaping}
\label{sec:abation_reward}

We add a penalty term of the form $\lambda \left( 1 - \frac{E(\tau)}{L} \right)$ to the trajectory return to encourage generating diverse answers. We report the ablation results in Table~\ref{tab:reward_shape_ablation} and Table~\ref{tab:reward_penalty_ablation_cross_benchmark}. After removing the trajectory penalty term, the model exhibits decreases in accuracy, correction rate, and the average number of unique answers. This indicates that the trajectory penalty encourages the model to change its responses when it is wrong, which in turn leads to improved accuracy.

\begin{table*}[h]
\caption{Reward shaping ablation (5-turn budget; $N{=}500$).
We report first-turn accuracy (Acc@1), cumulative 5-turn accuracy (Acc@5), and the correction rate relative to turn 1:
$\mathrm{Corr}=\frac{\mathrm{Acc@5}-\mathrm{Acc@1}}{1-\mathrm{Acc@1}}$.
We also report the average number of unique answers on incorrect trajectories, and the overall average number of unique answers.
Deltas are computed w.r.t.\ \name{}.}
\centering
\small
\setlength{\tabcolsep}{5pt}
\renewcommand{\arraystretch}{1.08}

\newcommand{\deltabox}[1]{%
  \hspace{0.10em}%
  \makebox[0pt][l]{\smash{\raisebox{-0.65ex}{\tiny #1}}}%
  \hspace{\DeltaW}%
}
\newcommand{\up}[1]{\deltabox{\textcolor{red}{\ensuremath{\uparrow}#1}}}
\newcommand{\down}[1]{\deltabox{\textcolor{teal}{\ensuremath{\downarrow}#1}}}
\newcommand{\same}[1]{\deltabox{\textcolor{gray}{\ensuremath{\rightarrow}#1}}}

\begin{tabular}{@{}lccccc@{}}
\toprule[1.2pt]
\textbf{Method} &
\textbf{Acc@1 (\%)} &
\textbf{Acc@5 (\%)} &
\textbf{Corr. (\%) $\uparrow$} &
\textbf{Avg. \#Unique (wrong traj.)} &
\textbf{Avg. \#Unique (overall)} \\
\midrule[0.9pt]
\rowcolor{gray!15}
\name{} &
38.6 &
\textbf{58.2} &
\textbf{31.9} &
\textbf{2.038} &
\textbf{1.666} \\
- trajectory penalty &
36.0\down{2.6} &
55.8\down{2.4} &
30.9\down{1.0} &
1.760\down{0.278} &
1.560\down{0.106} \\
\bottomrule[1.2pt]
\end{tabular}
\vspace{-8pt}
\label{tab:reward_shape_ablation}
\end{table*}

\begin{table*}[h]
\caption{Reward shaping ablation on cross-benchmark generalization after training on MetaMathQA (MATH subset).
We report \textbf{multi@5} accuracy (\%) with a maximum budget of 5 turns. Deltas are computed w.r.t.\ DRIFT-5turn.}
\centering
\small
\setlength{\tabcolsep}{2.4pt}
\renewcommand{\arraystretch}{1.08}

\setlength{\DeltaW}{2.1em} 
\newcommand{\deltabox}[1]{%
  \hspace{0.10em}%
  \makebox[0pt][l]{\smash{\raisebox{-0.65ex}{\tiny #1}}}%
  \hspace{\DeltaW}%
}
\newcommand{\up}[1]{\deltabox{\textcolor{red}{\ensuremath{\uparrow}#1}}}
\newcommand{\down}[1]{\deltabox{\textcolor{teal}{\ensuremath{\downarrow}#1}}}
\newcommand{\same}[1]{\deltabox{\textcolor{gray}{\ensuremath{\rightarrow}#1}}}

\begin{tabular}{@{}l!{\vrule width 0.6pt}cccc!{\vrule width 0.6pt}cccc!{\vrule width 0.6pt}c@{}}
\toprule[1.2pt]
\multicolumn{1}{c!{\vrule width 0.6pt}}{} &
\multicolumn{4}{c!{\vrule width 0.6pt}}{\textbf{Math}} &
\multicolumn{4}{c!{\vrule width 0.6pt}}{\textbf{General}} &
\textbf{All} \\
\cmidrule(lr){2-5}\cmidrule(lr){6-9}\cmidrule(lr){10-10}
\textbf{Method} &
\textbf{MATH} & \textbf{MATH500} & \textbf{ThmQA} & \textbf{Avg} &
\textbf{MMLU-R} & \textbf{MMLU-P} & \textbf{GPQA} & \textbf{Avg} &
\textbf{Avg} \\
\midrule[0.9pt]

\rowcolor{gray!15}
DRIFT
& 55.9 & 58.2 & 34.3 & 49.5
& 84.6 & 57.2 & 72.7 & 71.5
& 60.5 \\

- trajectory penalty
& 53.4\down{2.5} & 55.8\down{2.4} & 32.6\down{1.7} & 47.3\down{2.2}
& 82.3\down{2.3} & 56.1\down{1.1} & 67.5\down{5.2} & 68.6\down{2.9}
& 58.0\down{2.5} \\

\bottomrule[1.2pt]
\end{tabular}
\vspace{-10pt}
\label{tab:reward_penalty_ablation_cross_benchmark}
\end{table*}

\subsection{Learning Curves Across Hyperparameter Settings}

We examine DRIFT's training dynamics under different choices of the discount factor $\gamma$ in the shaped trajectory return and the parameter $\beta$ in the exponential trajectory reweighting with prompt-level normalization.
Figures~\ref{fig:step_gamma} and \ref{fig:step_beta} report multi-turn accuracy over training steps.
Across a broad range of $\gamma$ and $\beta$, the learning curves improve steadily and remain stable throughout training, without abrupt collapse or divergence.
While different settings lead to modest differences in the final accuracy and the amount of fluctuation, the overall trend is consistent, suggesting that DRIFT is not brittle to these hyperparameters in practice.

\begin{figure}[h]
\vskip 0.2in
\begin{center}
\centerline{\includegraphics[width=0.9\columnwidth]{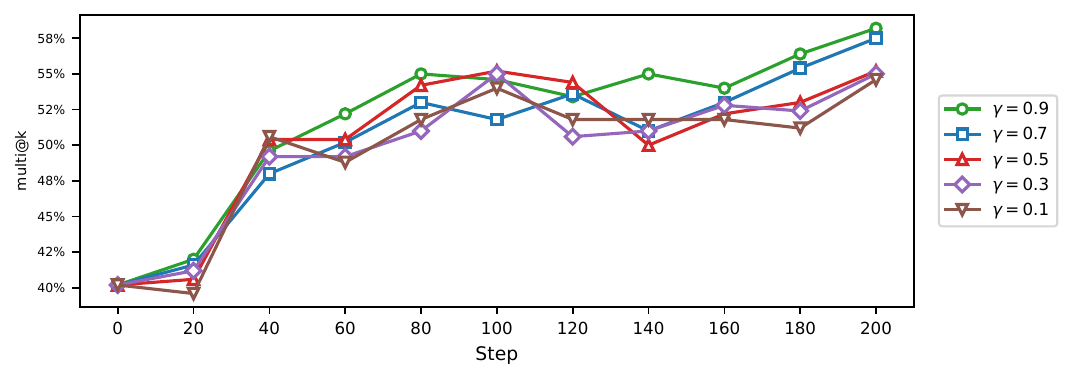}}
\caption{Accuracy under different $\gamma$ values over training steps.}
\label{fig:step_gamma}
\end{center}
\vskip -0.2in
\end{figure}

\begin{figure}[t]
\vskip 0.2in
\begin{center}
\centerline{\includegraphics[width=0.9\columnwidth]{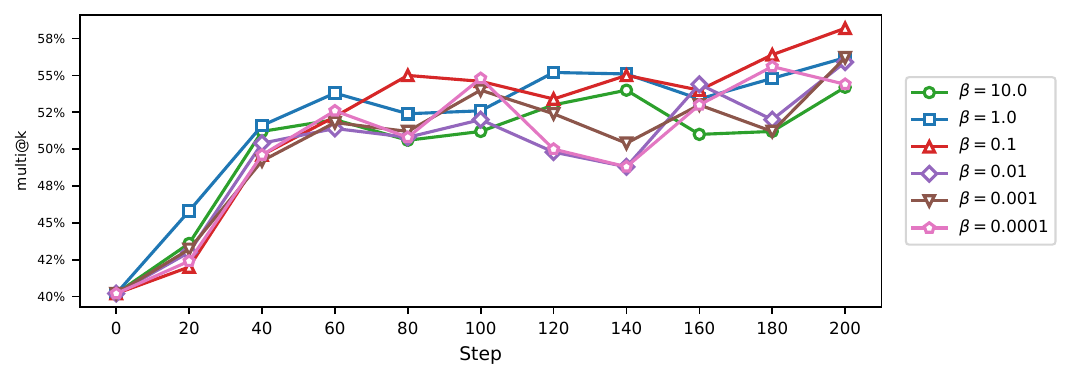}}
\caption{Accuracy under different $\beta$ values over training steps.}
\label{fig:step_beta}
\end{center}
\vskip -0.2in
\end{figure}

We plot the training curves under different rollout numbers $K$ in Fig.~\ref{fig:step_k} while keeping the batch size and training steps fixed. Table~\ref{tab:rolloutK-group-stats} summarizes the group-level data distribution for different $K$. The results show that when $K$ is small, groups are more likely to become degenerate (all-correct or all-wrong), increasing the proportion of such groups. In this regime, the method gradually approaches an SFT-like update with nearly uniform (all-one) weights, and the performance becomes close to SFT-5Turn, leading to worse-than-expected results. In contrast, larger rollouts typically yield better performance.

\begin{table}[h]
\centering
\caption{Group composition and trajectory success rate under different rollout numbers $K$ (batch size and training steps fixed). ``Effective'' groups are mixed groups (neither all-correct nor all-wrong).}
\label{tab:rolloutK-group-stats}
\begin{tabular}{rrrrr}
\toprule
$K$ & All-correct (\%) & All-wrong (\%) & Effective/Mixed (\%) & Accuracy (\%) \\
\midrule
4   & 52.9 & 11.5 & 35.5 & 73.1 \\
8   & 42.3 &  7.8 & 49.9 & 73.4 \\
16  & 34.6 &  5.8 & 59.6 & 73.9 \\
32  & 25.0 &  3.8 & 71.2 & 74.6 \\
64  & 21.2 &  2.5 & 76.3 & 75.0 \\
128 & 12.7 &  2.0 & 85.3 & 74.5 \\
\bottomrule
\end{tabular}
\end{table}

\begin{figure}[h]
\vskip 0.2in
\begin{center}
\centerline{\includegraphics[width=0.9\columnwidth]{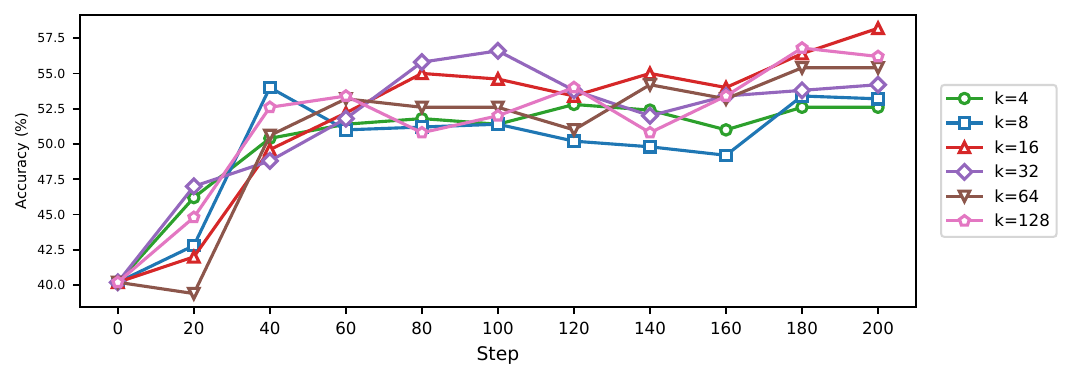}}
\caption{Accuracy under different $K$ values over training steps.}
\label{fig:step_k}
\end{center}
\vskip -0.2in
\end{figure}

\section{Case Study}
\label{sec:case_study}

\subsection{Case 1}

We qualitatively analyze a failure case from MATH-500 in Figure~\ref{fig:case_study} to demonstrate \name{}'s error-recovery capability. The problem requires finding integer values of $x$ where $x^3 | 10!$ (implying the exponent constraint $3c \le 2$ for prime factor 5).

\begin{figure*}[t]
    \centering
    \scriptsize 
    \setlength{\tabcolsep}{4pt}
    \renewcommand{\arraystretch}{1.25}
    \begin{tabular}{p{0.48\textwidth} | p{0.48\textwidth}}
        \toprule
        \multicolumn{2}{p{0.96\textwidth}}{\textbf{Problem:} If $x^3$ is a positive factor of $10!,$ how many possible integer values of $x$ are there? (Reminder: $10! = 2^8 \cdot 3^4 \cdot 5^2 \cdot 7$)} \\
        \midrule
        \multicolumn{1}{c|}{\textbf{Base Model (Qwen2.5-3B-Instruct)}} & \multicolumn{1}{c}{\textbf{Ours (\textsc{Drift})}} \\
        \midrule
        
        \textbf{Turn 1} \hfill \textcolor{red}{\xmark \textbf{Incorrect (18)}} & \textbf{Turn 1} \hfill \textcolor{red}{\xmark \textbf{Incorrect (10)}} \\
        ``... For $x^3$ to be a factor of $10!$... $3c \le 2$. This means $c$ can be 0 or 1 (2 choices)... Total... $3 \times 3 \times 2 \times 1 = 18$." & 
        ``... This means that $x$ must be composed of the prime factors of $10!$ in such a way that when cubed, it still divides $10!$... [General conceptual error]" \\
        \midrule
        \textit{User: Incorrect. Please think again.} & \textit{User: Incorrect. Please think again.} \\
        \midrule

        \textbf{Turn 2} \hfill \textcolor{red}{\xmark \textbf{Incorrect (12)}} & \textbf{Turn 2} \hfill \textcolor{red}{\xmark \textbf{Incorrect (30)}} \\
        ``... $3c \le 2$, \textbf{so $c$ can be 0 or 1 (2 choices)}... The total number... is $3 \times 2 \times 2 \times 1 = 12$." & 
        ``... For prime factor 5, possible exponents in $x$ are 0, 1 (since $5^3 = 125$... not a factor of $5^2$). \textbf{This gives us 2 possibilities.} ... Total... 30." \\
        \midrule
        \textit{User: Incorrect. Please think again.} & \textit{User: Incorrect. Please think again.} \\
        \midrule

        \textbf{Turn 3} \hfill \textcolor{red}{\xmark \textbf{Incorrect (12)}} & \textbf{Turn 3} \hfill \textcolor{green}{\cmark \textbf{Correct (6)}} \\
        ``... $3c \le 2$, \textbf{so $c$ can be 0 or 1 (2 choices)}... The total number... is $3 \times 2 \times 2 \times 1 = 12$." & 
        ``... For prime factor 5, possible exponents in $x$ are 0 (since $5^3 = 125$... not a factor of $5^2$). \textbf{This gives us 1 possibility.} ... Total... $3 \times 2 \times 1 \times 1 = 6$." \\
        \midrule
        \textit{User: Incorrect. Please think again.} & \multicolumn{1}{c}{\multirow{7}{*}{}} \\
        \cline{1-1}

        \textbf{Turn 4 \& 5} \hfill \textcolor{red}{\xmark \textbf{Incorrect (12)}} & \\
        \textit{[Repeats identical erroneous logic as Turn 2 \& 3]} & \\
        \bottomrule
    \end{tabular}
    \caption{Trajectory comparison. The Base Model (Qwen2.5-3B-Instruct) correctly sets up the inequality $3c \le 2$ but falls into a reasoning loop, repeatedly miscalculating the integer solutions despite feedback. \name{} initially errs but successfully uses feedback in Turn 3 to correct its analysis of the prime factor 5 ($5^3 \nmid 10!$), deriving the correct answer.}
    \label{fig:case_study}
\end{figure*}

As shown in Figure~\ref{fig:case_study}, the base model suffers from behavioral collapse, repeating the logically inconsistent claim that ``$c$ can be 0 or 1'' (despite $3(1) > 2$) across four turns. Generic feedback fails to break this local optimum. In contrast, \name{} effectively leverages negative feedback to prune the search space. In Turn 3, it explicitly re-verifies that $5^3 \nmid 10!$, correcting the exponent count to 1. This illustrates that \name{} learns to explore alternative reasoning paths rather than merely resampling high-probability errors.

\subsection{Case 2}
In Figure~\ref{fig:case_study_2}, we examine a modular arithmetic problem from MATH-500. This case illustrates a different failure mode: hallucinated verification. The base model incorrectly verifies a wrong answer and becomes stuck in a loop because it cannot correct its own arithmetic error despite external feedback. \name{}, conversely, successfully explores different candidate solutions across turns.

\begin{figure*}[h]
    \centering
    \scriptsize
    \setlength{\tabcolsep}{4pt}
    \renewcommand{\arraystretch}{1.25}
    \begin{tabular}{p{0.48\textwidth} | p{0.48\textwidth}}
        \toprule
        \multicolumn{2}{p{0.96\textwidth}}{\textbf{Problem:} Find the unique \textbf{odd} integer $t$ such that $0<t<23$ and $t+2$ is the inverse of $t$ modulo $23$.} \\
        \midrule
        \multicolumn{1}{c|}{\textbf{Base Model (Qwen2.5-3B-Instruct)}} & \multicolumn{1}{c}{\textbf{Ours (\textsc{Drift})}} \\
        \midrule
        
        \textbf{Turn 1} \hfill \textcolor{red}{\xmark \textbf{Incorrect (5)}} & \textbf{Turn 1} \hfill \textcolor{red}{\xmark \textbf{Incorrect (5)}} \\
        ``... We need to find $t$ such that $t^2 + 2t - 1 \equiv 0 \pmod{23}$. We need to check each odd integer from 1 to 21..." & 
        ``... This implies $(t + 2) \cdot t \equiv 1 \pmod{23}$. We need to test each odd integer between 1 and 21..." \\
        \midrule
        \textit{User: Incorrect. Please think again.} & \textit{User: Incorrect. Please think again.} \\
        \midrule

        \textbf{Turn 2} \hfill \textcolor{red}{\xmark \textbf{Incorrect (19)}} & \textbf{Turn 2} \hfill \textcolor{red}{\xmark \textbf{Incorrect (15)}} \\
        ``Let's test the odd integers... \newline
        - For $t=17$: $17^2+34-1 = 322 \equiv 16 \not\equiv 0$ \newline
        - For $t=19$: $19^2+38-1 = 400 \equiv 0 \pmod{23}$ \newline
        We find that $t=19$ satisfies the equation..." \newline
        \textit{[Note: $400 \equiv 9 \pmod{23}$, not $0$. Hallucinated calculation.]} & 
        ``Given the previous attempt was incorrect, let's re-evaluate... We will test each odd integer... [Model attempts a different candidate in the search space]." \\
        \midrule
        \textit{User: Incorrect. Please think again.} & \textit{User: Incorrect. Please think again.} \\
        \midrule

        \textbf{Turn 3} \hfill \textcolor{red}{\xmark \textbf{Incorrect (19)}} & \textbf{Turn 3} \hfill \textcolor{green}{\cmark \textbf{Correct (17)}} \\
        ``Let's test the odd integers... \newline
        - For $t=19$: $19^2+38-1 = 400 \equiv 0 \pmod{23}$ \newline
        Therefore, the unique odd integer $t$ is 19." & 
        ``... We will test each odd integer... [Implicitly verifies $17$]. \newline
        \textit{[Note: $17^2+2(17)-1 = 322 = 14 \times 23 \equiv 0$.]}" \\
        \midrule
        \textit{User: Incorrect. Please think again.} & \multicolumn{1}{c}{\multirow{7}{*}{}} \\
        \cline{1-1}

        \textbf{Turn 4 \& 5} \hfill \textcolor{red}{\xmark \textbf{Incorrect (19)}} & \\
        \textit{[Repeats exact same calculation error and output]} & \\
        \bottomrule
    \end{tabular}
    \caption{Trajectory comparison on a modular arithmetic problem. The Base Model commits a calculation error in Turn 2, falsely believing that $400$ is divisible by $23$. It becomes confident in this incorrect verification and ignores subsequent negative feedback, repeating the answer 19. \name{} treats the feedback as a signal to explore the solution space, moving from candidate 5 to 15, and finally verifying 17 correctly in Turn 3.}
    \label{fig:case_study_2}
\end{figure*}

\subsection{Case 3: Limitations in Knowledge-Intensive Domains}
\label{sec:case_3}
he primary contribution of \name{} is to enable verifiable multi-turn optimization with the training efficiency of SFT. While \name{} demonstrates strong performance on reasoning-intensive tasks (e.g., MATH) by effectively learning correction policies, we analyze an out-of-distribution case in Figure~\ref{fig:case_study_3} to distinguish between learning \textit{how to correct} (strategy) and possessing the \textit{domain knowledge} required to correct (capability).

In this organic chemistry problem from GPQA:
\begin{itemize}
    \item \textbf{Base Model:} It selects the correct answer (C) in Turn 1, but relies on a hallucinated mechanism ("electrophilic aromatic substitution") that is chemically impossible for the given aliphatic structure.
    \item \textbf{UFO:} Lacking the knowledge to verify the answer, it falls into an exhaustive guessing loop, repeating the exact same reasoning text while cycling through options (B $\to$ A $\to$ B $\to$ D $\to$ C).
    \item \textbf{\name{}:} Its reasoning ("forming a carbocation") remains heuristically plausible but factually flawed for the concerted mCPBA mechanism.
\end{itemize}

This comparison highlights that \name{} successfully instills the strategic behavior of multi-turn correction, allowing the model to avoid the blind trial-and-error loops seen in baselines like UFO. However, the accuracy of the reasoning is bounded by the model's intrinsic knowledge. Since \name{} focuses on the optimization framework for efficiency rather than domain adaptation, it does not spontaneously inject missing subject matter expertise. This is not a limitation of the \name{} objective itself, but rather indicates that for knowledge-intensive domains, the learned correction policy must be supported by sufficient underlying capabilities. A promising direction for future work is to couple the efficient correction training of \name{} with domain-specific continual pre-training, ensuring the model possesses both the policy to correct and the knowledge to correct accurately.

\begin{figure*}[h]
    \centering
    \scriptsize
    \setlength{\tabcolsep}{3pt}
    \renewcommand{\arraystretch}{1.2}
    \begin{tabular}{p{0.32\textwidth} | p{0.32\textwidth} | p{0.32\textwidth}}
        \toprule
        \multicolumn{3}{p{0.96\textwidth}}{\textbf{Problem (GPQA):} Reaction of 3,3,6-trimethylhepta-1,5-dien-4-one with mCPBA followed by Methyllithium/CuI. (Requires knowledge of epoxidation and Gilman reagent mechanisms)} \\
        \midrule
        \multicolumn{1}{c|}{\textbf{Base Model}} & \multicolumn{1}{c|}{\textbf{UFO (RL Baseline)}} & \multicolumn{1}{c}{\textbf{Ours (\name{})}} \\
        \midrule
        
        \textbf{Turn 1} \hfill \textcolor{green}{\cmark \textbf{Correct (C)}} & \textbf{Turn 1} \hfill \textcolor{red}{\xmark \textbf{Incorrect (B)}} & \textbf{Turn 1} \hfill \textcolor{green}{\cmark \textbf{Correct (C)}} \\
        \textit{Reasoning:} "...leads to diol derivative due to \textbf{electrophilic aromatic substitution}..." & 
        \textit{Reasoning:} "...undergo epoxidation... form tertiary alcohol... product is 5-hydroxy..." & 
        \textit{Reasoning:} "...cleavage of the double bond... \textbf{forming a carbocation}..." \\
        \vspace{0.1cm}
        \textit{[Analysis: Factually wrong mechanism. Lucky guess.]} & 
        \textit{[Analysis: Plausible start, wrong conclusion.]} & 
        \textit{[Analysis: Factually wrong mechanism. Heuristic guess.]} \\
        \midrule
        
        \multirow{6}{*}{} & \textbf{Turn 2} \hfill \textcolor{red}{\xmark \textbf{Incorrect (A)}} & \multirow{6}{*}{} \\
        & \textit{Reasoning:} [Exact same text as Turn 1] "...product is 2,3,4,5,5-..." & \\
        \cline{2-2}
        
        & \textbf{Turn 3} \hfill \textcolor{red}{\xmark \textbf{Incorrect (B)}} & \\
        & \textit{Reasoning:} [Exact same text as Turn 1] "...product is 5-hydroxy..." & \\
        \cline{2-2}
        
        & \textbf{Turn 4} \hfill \textcolor{red}{\xmark \textbf{Incorrect (D)}} & \\
        & \textit{Reasoning:} [Exact same text as Turn 1] "...product is 4,4,5,7,7-..." & \\
        \cline{2-2}
        
        & \textbf{Turn 5} \hfill \textcolor{green}{\cmark \textbf{Correct (C)}} & \\
        & \textit{Reasoning:} [Exact same text as Turn 1] "...product is 6-hydroxy..." & \\
        & \textit{[Analysis: Blind exhaustive guessing.]} & \\
        \bottomrule
    \end{tabular}
    \caption{Risk Analysis on GPQA. UFO exposes the risk of blind guessing by cycling through options. Critically, the Base model and \name{} also resort to guessing, relying on hallucinated mechanisms (e.g., "carbocation") despite selecting the correct option. This collective failure in reasoning reveals a fundamental capability deficit: without domain knowledge, models revert to various forms of guessing rather than genuine problem-solving, regardless of the optimization strategy.}
    \label{fig:case_study_3}
\end{figure*}


\end{document}